\documentclass{article}

\usepackage{microtype}
\usepackage{graphicx}
\usepackage{subcaption}
\usepackage{booktabs} 

\usepackage{hyperref}
\usepackage{graphicx}   
\usepackage{amsmath}    
\usepackage{hyperref}   

\usepackage{amsmath}
\usepackage{amssymb}
\usepackage{amsthm}

\newtheorem{definition}{Definition}
\newtheorem{claim}{Claim}

\usepackage[preprint]{icml2026}

\usepackage{amsmath}
\usepackage{amssymb}
\usepackage{mathtools}
\usepackage{amsthm}
\usepackage{multirow}
\usepackage{booktabs,makecell,tabularx} 
\usepackage{tcolorbox}
\usepackage{longtable}
\usepackage{url} 
\usepackage{xcolor}

\usepackage[capitalize,noabbrev]{cleveref}

\theoremstyle{plain}

\theoremstyle{definition}

\theoremstyle{remark}

\usepackage[textsize=tiny]{todonotes}
\usepackage{xspace}
\usepackage{tabularx}
\usepackage{enumitem}
\newcommand{\M}{CRPO\xspace}

\icmltitlerunning{CRPO: Character-centric Group Relative Policy Optimization for Role-aware Reasoning in Role-playing Agents}

\begin{document}

\twocolumn[

\icmltitle{CRPO: Character-centric Group Relative Policy Optimization \\ for Role-aware Reasoning in Role-playing Agents}



  \icmlsetsymbol{corr}{*}

  \begin{icmlauthorlist}
    \icmlauthor{Yihong Tang}{hit,slai}
    \icmlauthor{Kehai Chen}{hit}
    \icmlauthor{Liang Yue}{hit}
    \icmlauthor{Benyou Wang}{cusz,slai}
    \icmlauthor{Min Zhang}{hit}
  \end{icmlauthorlist}

  \icmlaffiliation{hit}{Institute of Computing and Intelligence, Harbin Institute of Technology, Shenzhen, China}
  \icmlaffiliation{slai}{Shenzhen Loop Area Institute (SLAI), Shenzhen, China}
  \icmlaffiliation{cusz}{The Chinese University of Hong Kong, Shenzhen, China}

  \icmlcorrespondingauthor{Kehai Chen}{chenkehai@hit.edu.cn}

  \icmlkeywords{Role-playing, Dialogue Generation}

  \vskip 0.3in
]



\printAffiliationsAndNotice{} 

\begin{abstract}
Recent advancements in Reinforcement Learning (RL), particularly Group Relative Policy Optimization (GRPO), have significantly enhanced the reasoning capabilities of Large Language Models. 
However, applying these problem-centric optimization methods to role-playing agents often leads to a loss of character fidelity and style collapse, as they prioritize context-specific utility over persona alignment. 
To address this, we propose Character-Centric Group Relative Policy Optimization (CRPO), a framework designed to realign RL objectives with the role-playing task. 
CRPO improves character distinctiveness through three mechanisms: decoupling task logic from stylistic rewards to resolve gradient conflicts, dynamically adapting optimization constraints based on character complexity, and utilizing generic responses as negative baselines to prevent the model from reverting to a common distribution. 
Extensive experiments demonstrate that CRPO outperforms existing methods in consistency, emotion and others.
Our code is available at \url{https://github.com/Toyhom/CRPO}.
\end{abstract}

\section{Introduction}

Recently, efficient Reinforcement Learning (RL) algorithms, represented by Group Relative Policy Optimization (GRPO)~\cite{shao2024deepseekmath}, are rapidly becoming the core drivers shaping the reasoning capabilities of Large Language Models (LLMs), catalyzing a series of improvements such as DAPO~\cite{yu2025dapo} and GSPO~\cite{zheng2025group}. 
This paradigm shift also propels the field of Role-Playing Agents (RPAs) \cite{wang-etal-2024-rolellm} through a transformation from \textit{behaviorist imitation} to \textit{cognitivist reasoning}.
Traditional Supervised Fine-Tuning (SFT), representative of the former, often focuses on imitating the shallow linguistic style of characters. 
Conversely, recent preference-optimization methods have explored global persona faithfulness, e.g., optimizing active and passive character constraints through DPO-style objectives~\cite{peng2024quantifying}.
In parallel, emerging GRPO-based methods also introduce verifiable rewards—such as cognitive focus~\cite{tang2026characterr1}, keywords \cite{wang2025rlver}, and emotion \cite{wang2025raidenr1}—to incentivize models to internalize character cognitive patterns through deep Chains of Thought (CoT), preliminarily enabling the \textit{Role-aware Reasoning}~\cite{feng2025reasoning, tang2025thinking}.

\begin{figure}[t!]
\centering
\includegraphics[width=.93\linewidth]{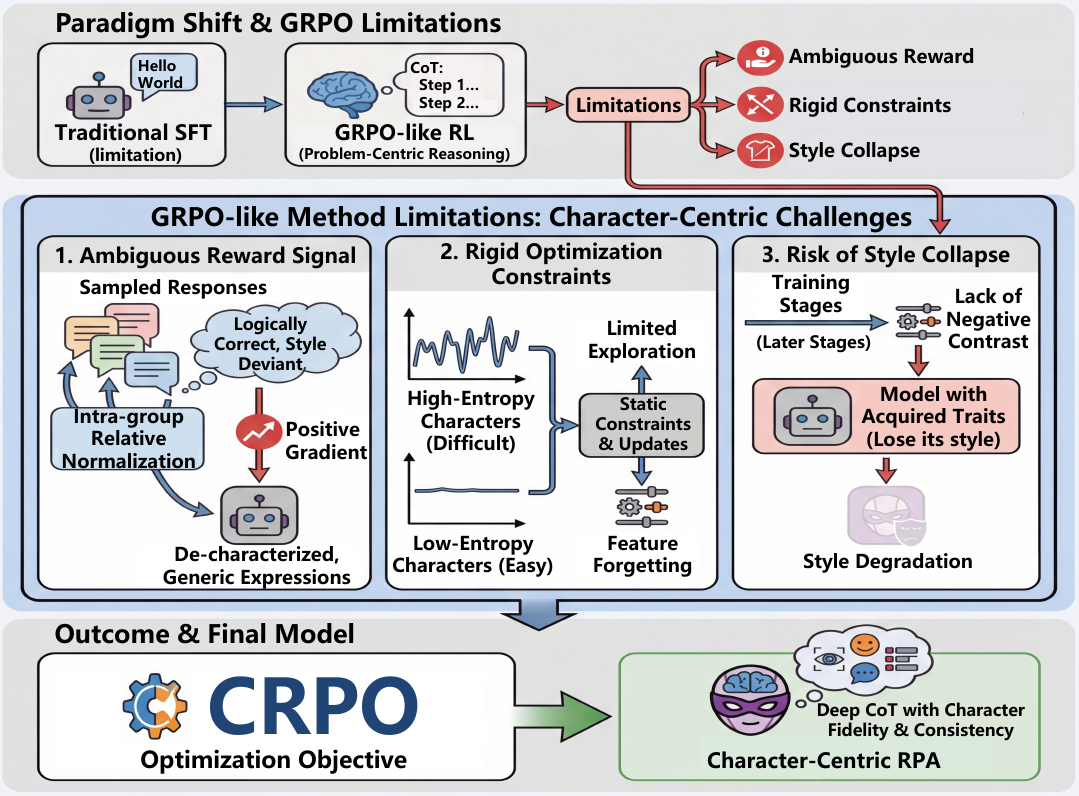}
\caption{
Problem-centric optimization suffers from ambiguous rewards, rigid constraints, and style collapse, which impede consistent persona alignment. 
}
\vskip -0.2in
\label{fig:intro_figure1}
\end{figure}

However, existing GRPO-like methods exhibit a significant limitation: these techniques primarily reward LLMs for generating isolated, context-oriented responses (i.e., \textit{problem-centric}). This reduces the model's incentive to maintain character fidelity, indicating a lack of a \textit{character-centric} optimization objective. As illustrated in Figure~\ref{fig:intro_figure1}, this limitation manifests in three main challenges: 
(1) \textbf{Ambiguous Reward Signal}: Standard GRPO relies on intra-group relative normalization. When a sampled group collectively deviates from the character style but remains logically correct, the relative advantage still assigns positive gradients, inducing the model to learn de-characterized, generic expressions. 
(2) \textbf{Rigid Optimization Constraints}: The difficulty of fitting different characters (entropy) varies significantly. Character-agnostic constraints and update mechanisms fail to adapt to this dynamic variation, limiting exploration for high-entropy characters and causing feature forgetting in low-entropy characters. 
(3) \textbf{Risk of Style Collapse}: Due to the lack of explicit negative sample contrasts, models often fall into style degradation during the later stages of training, inadvertently weakening the retention of acquired high-accuracy traits. 
Addressing these challenges is crucial for improving character consistency and the depth of role-aware reasoning, especially when facing complex instructions that require balancing task logic with persona style.

To this end, we propose \textbf{Character-Centric Group Relative Policy Optimization (CRPO)}, aiming to reconstruct the optimization mechanism of GRPO in role-playing. Our framework consists of three key components: (1) \textbf{Dual-Stream Advantage Estimation} to address ambiguous reward signals by decomposing rewards into task and style streams, utilizing distinct normalization strategies for each; (2) \textbf{Entropy-Aware Adaptive Exploitation} to resolve rigid optimization constraints by dynamically regulating update intensities based on character uncertainty; and (3) \textbf{Contrastive Anchor Sampling} to mitigate style collapse by introducing generic responses as negative anchors to enforce character-specific learning.

The main contributions of this paper are summarized as follows:

(1) To the best of our knowledge, we are the first to propose a character-centric GRPO-like framework, CRPO, for role-playing, recalibrating the construction of RPAs from the optimization objectives.

(2) Under the CRPO framework, we propose the synergistic operation of Dual-Stream Advantage Estimation, Entropy-Aware Adaptive Exploitation, and Contrastive Anchor Sampling. These methods alleviate the challenges of reward ambiguity, training instability, and style collapse.

(3) Extensive experiments illustrate that \M significantly enhances the character consistency, hallucination mitigation and robustness of role-playing agents.

\section{Related Work}

\begin{figure*}[t!]
\centering
\includegraphics[width=0.9\textwidth]{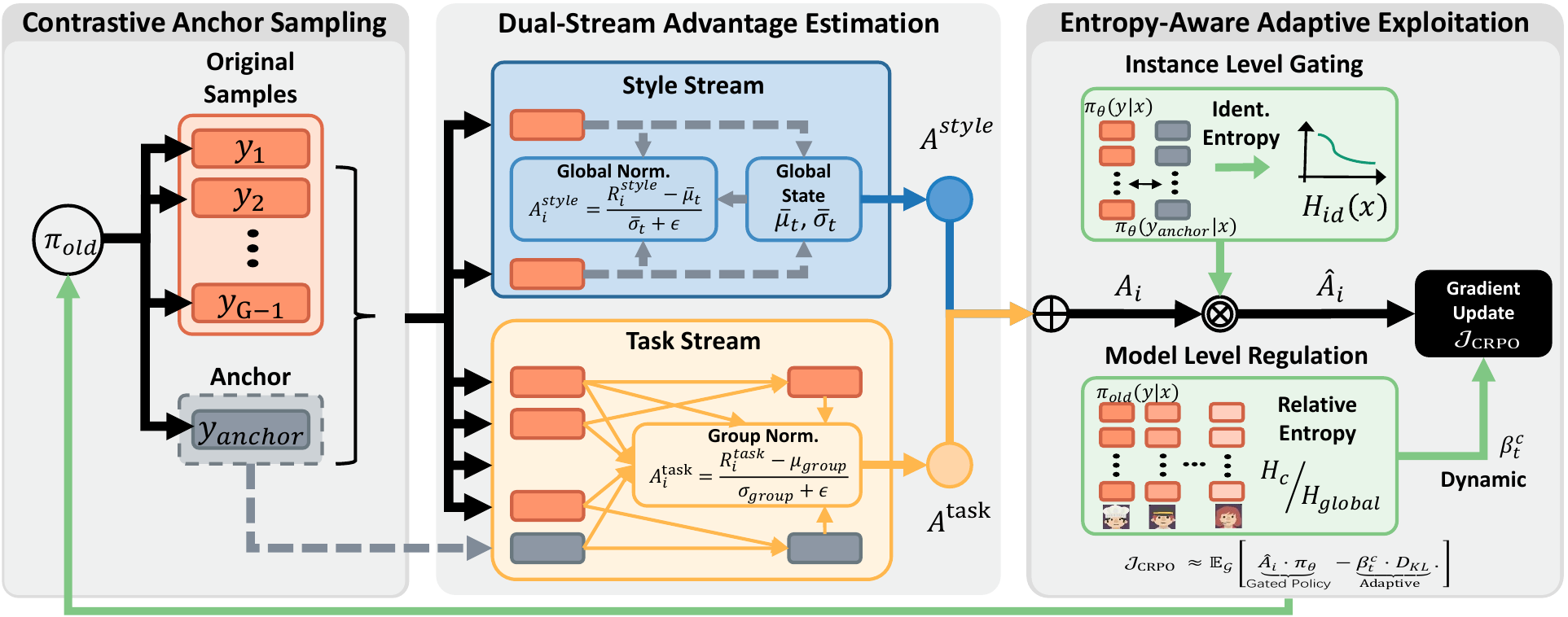}
\caption{The overall framework of \M. The method synergizes three core mechanisms to enhance role-playing: (1) Contrastive Anchor Sampling prevents style collapse by introducing generic negative samples; (2) Dual-Stream Advantage Estimation decouples task and style rewards to resolve optimization conflicts; and (3) Entropy-Aware Adaptive Exploitation dynamically regulates gradient updates based on character-specific uncertainty and difficulty.}
\vskip -0.2in
\label{fig: main}
\end{figure*}

\subsection{Role-playing Agents}
The pursuit of emotionally intelligent and consistent virtual interactions is driving Role-Playing Agents (RPAs) beyond mere \textit{behavioral imitation} toward \textit{cognitive simulation}. The former focused on capturing surface-level linguistic patterns through prompt engineering~\cite{tang2025rise, duan2025orpp}, SFT~\cite{wang-etal-2024-rolellm, zhou2024characterglm, yu2024neeko, lu2024large, li2025understanding, yang2025hycora}, retrieval-augmented generation~\cite{chen2025moom} and formalized character logic~\cite{peng2025codifying, peng2026deriving}. In parallel, the Sotopia series~\cite{zhou2024sotopia, wang2024sotopia, yu2025sotopia} explores the training of social agents in interactive environments, which is complementary to role-playing.
However, relying solely on surface patterns often leads to logical inconsistencies and character drift.
This limitation has spurred a shift toward explicit cognitive simulation.

Recent studies, including TBS~\cite{zhang2024thinking}, CoSER~\cite{wang2025coser}, REDEN-R1~\cite{wang2025raidenr1}, CPO~\cite{ye-etal-2025-cpo}, MOA~\cite{liao2025moa} and Character-R1 ~\cite{tang2026characterr1}, now pioneer the modeling of internal thought processes.
By leveraging techniques such as CoT ~\cite{wei2022chain}, distillation and RL, these works preliminarily achieve role-aware reasoning. Nevertheless, most of these attempts directly apply general-purpose RL algorithms without tailoring the optimization dynamics to the nuances of persona maintenance.

\subsection{Group Relative Policy Optimization and Variants}
Group Relative Policy Optimization (GRPO) ~\cite{shao2024deepseekmath} removes value networks for efficiency~\cite{jiang2025surveyhuman} but faces limitations prompting refinements:
(1) \textbf{Credit Assignment}. To refine coarse rewards, OAR~\cite{li2026outcome}, $\lambda$-GRPO~\cite{wang2025lambdagrpo}, GSPO~\cite{zheng2025group} and GTPO~\cite{simoni2025gtpo} optimize token-level weights or suppress misleading updates via counterfactual perturbations, entropy control, and likelihood constraints.
(2) \textbf{Signal Enhancement}. Addressing sparse or noisy signals, NGRPO~\cite{nan2025ngrpo}, SGPO~\cite{lee2025sgpo}, AMIR-GRPO~\cite{yari2026amir}, RiskPO~\cite{ren2025riskpo}, KRPO~\cite{wang2025kalmanfilter}, GDPO~\cite{liu2026gdpo}, EDGE-GRPO~\cite{zhang2025edge}, and AEPO~\cite{wang2025arbitrary} leverage error transformation, risk-based ranking, Kalman dynamics, or entropy-driven exploration to calibrate advantage estimation.
(3) \textbf{Stability \& Efficiency}. To reduce gradient conflicts, DaGRPO~\cite{xie2025dagrpo}, SFPO~\cite{wang2025slowfast}, and Dr. GRPO~\cite{liu2025understanding} employ distinctiveness sampling, dual-layer updates, or length normalization strategies.
(4) \textbf{Guided Exploration}. Scaf-GRPO~\cite{zhang2025scafgrpo}, PTA-GRPO~\cite{dou2025plan}, and Training-Free GRPO~\cite{cai2025trainingfree} incorporate hierarchical prompting or context-based guidance to navigate complex tasks.
Despite these variants achieving strong performance in mathematical and code reasoning tasks, they largely adhere to a problem-centric optimization paradigm that maximizes objective utility and logical correctness for input instructions. This singular perspective falls short for role-playing tasks, as it overlooks the core need to be character-centric—both solving tasks and enacting a persona with specific cognitive patterns and linguistic styles. Based on this shift in perspective, CRPO addresses three key deficiencies in the GRPO paradigm.

\section{Method}

The CRPO framework aims to redirect the optimization objective of reinforcement learning from generic ``problem-centric'' to ``character-centric'' through three synergistic components. Addressing the issue of ambiguous reward signals, we introduce \textbf{Dual-Stream Advantage Estimation}, decoupling task-related relative rewards from character-related absolute rewards. 

Addressing the heterogeneity in fitting difficulty across different characters, we design \textbf{Entropy-Aware Adaptive Exploitation} to dynamically adjust the optimization constraints and update intensity for each character. Addressing the risk of style degradation in the later stages of training, we propose a \textbf{Contrastive Anchor Sampling} strategy, preventing the policy from collapsing into a generic distribution by forcibly injecting negative samples into the sampling group. Figure~\ref{fig: main} illustrates the overall architecture of CRPO.

\subsection{Preliminary}
GRPO is a PPO variant that removes the Value Function. It estimates the advantage function by normalizing intra-group rewards for multiple sampling results generated from a single prompt, thereby reducing resource consumption and stabilizing training. Formally, given a prompt $x$ and a policy model $\pi_\theta$, GRPO samples a set $\mathcal{G}=\{y_i\}_{i=1}^G$, containing $G$ outputs from the old policy $\pi_{old}$. For each output $y_i$, a corresponding scalar reward $R_i$ is calculated, and the advantage function $A_i$ is computed based on intra-group statistics:
\begin{equation}
\scriptsize
\begin{aligned}
A_i = \frac{R_i - \text{mean}(\{R_1, \dots, R_G\})}{\text{std}(\{R_1, \dots, R_G\}) + \epsilon},
\end{aligned}
\end{equation}
where $\epsilon$ is a smoothing term to prevent division by zero. The objective function $\mathcal{J}_{GRPO}$ combines a clipping mechanism for the importance sampling ratio $\rho_i = \pi_\theta(y_i|x) / \pi_{old}(y_i|x)$ with a KL divergence penalty:
\begin{equation}
\scriptsize
\begin{aligned}
\mathcal{J}_{GRPO}& (\theta) = \mathbb{E}_{x \sim \mathcal{D}, \{y_i\} \sim \pi_{old}} 
 \bigg[ \frac{1}{G} \sum_{i=1}^G \Big( \min(\rho_i A_i, \\ 
 & \text{clip}(\rho_i, 1-\varepsilon, 1+\varepsilon)A_i)  
- \beta \mathbb{D}_{KL}(\pi_\theta || \pi_{ref}) \Big) \bigg],
\end{aligned}
\end{equation}

where $\mathcal{D}$ denotes the training dataset used for optimization, $\pi_{\mathrm{ref}}$ is the fixed reference model, $\varepsilon$ is the clipping threshold for importance ratios, and $\beta$ is the KL penalty coefficient.

In role-playing reasoning tasks, we require the model-generated response $y$ to contain a clear Chain of Thought $y_{CoT}$ and a character response $y_{ans}$, denoted as $y = y_{CoT} \oplus y_{ans}$. Since role-playing is typically an open-ended generation task lacking a single objective standard answer, this not only increases the difficulty of reward design but also facilitates reward hacking. To mitigate this, our reward system is built upon Character-R1, introducing two types of verifiable rewards: (1) \textbf{Cognitive Focus Reward}: Requiring the model to explicitly state and elaborate on the current cognitive focus within $y_{CoT}$ as a checkable objective fact. 
(2) \textbf{Reference-Guided Reward}: Utilizing overlap-based metrics to guide the model closer to the target reference style while mitigating overfitting risks.

\subsection{Dual-Stream Advantage Estimation}
\label{sec: dual}
In role-playing, logical correctness is relative (dependent on context difficulty), whereas character style is absolute (adhering to a fixed profile). Direct mixed normalization of the two leads to ambiguous gradient directions. Therefore, we propose \textbf{Dual-Stream Advantage Estimation}, decomposing the total reward into task reward $R^{task}$ and style reward $R^{style}$ for independent processing.

\paragraph{Task Stream} 
To adapt to varying input problem difficulty, we adopt intra-group relative normalization. For the $i$-th sample in the group, its task advantage 
$A_i^{task} = \frac{R_i^{task} - \mu_{\mathcal{G}}^{task}}{\sigma_{\mathcal{G}}^{task} + \epsilon}$.
where $\mu_{\mathcal{G}}^{task}$ and $\sigma_{\mathcal{G}}^{task}$ are the mean and standard deviation of task rewards within the sampling group $\mathcal{G}$, respectively. This relative evaluation ensures that logically relatively better responses receive positive incentives even when the generation quality of the entire group is limited by high-difficulty problems.

\paragraph{Style Stream} 
To stabilize style rewards across batches, we introduce global normalization based on historical statistics. Style rewards reflect the degree of fit between the model and a specific character $c$ and should not fluctuate with batch sample quality. We maintain the exponential moving average of the historical reward mean $\bar{\mu}_{c,t}$ and variance $\bar{\sigma}_{c,t}$ for that character as normalization parameters. The style advantage 
$A_i^{style} = \frac{R_i^{style} - \bar{\mu}_{c,t}}{\bar{\sigma}_{c,t} + \epsilon}$.
The final advantage function is a weighted synthesis of both: $A_i = \lambda A_i^{task} + (1-\lambda) A_i^{style}$, where $\lambda$ is a balancing coefficient. This mechanism effectively decouples the optimization paths for logical capability and style performance.

\subsection{Character Entropy-Aware Adaptive Exploitation}

Since the knowledge of Large Language Models primarily originates from pre-training, forcibly requiring the model to undergo character style transfer on semantically ambiguous samples may destabilize the pre-training distribution. We propose an \textbf{Entropy-Aware Adaptive Exploitation} mechanism to dynamically regulate optimization constraints and update strategies at both instance-level and model-level.

\paragraph{Instance Level} We define Character Identification Entropy to quantify the model's confidence in distinguishing role-playing from generic responses. For input $x$, we calculate the character generation probability ratio $p_r(x)$ and the corresponding binary entropy $H_{id}(x)$:
\begin{align}
H_{id}(x,y) = -\left[ p_r \log p_r + (1-p_r) \log(1-p_r) \right],
\end{align}
where $p_r(x,y) = \frac{\pi_\theta(y|x)}{\pi_\theta(y|x) + \pi_\theta(y_{anchor}|x)}$, $y$ is the character response generated by $\pi_{old}$, and $y_{anchor}$ is the response generated by the model after removing character instructions (retaining only the dialogue context). In this calculation, we treat sequence probability as overall confidence. A higher $H_{id}(x)$ indicates that the model struggles to define the style attribution of the current response. Accordingly, we re-weight the advantage function
$\hat{A}_i = A_i \cdot ({1-\gamma H_{id}(x)})$,
where $\gamma > 0$ is a control coefficient. This mechanism acts as a safety valve; when the model is in a state of high uncertainty, the gradient signal is significantly suppressed, thereby achieving cautious optimization.

\paragraph{Model Level} To adapt to the intrinsic fitting difficulty of different characters, we calculate the average information entropy $H_c$ of the reference model $\pi_{ref}$ on character $c$'s data and define its ratio $r_H = H_c / H_{global}$ relative to the global average entropy $H_{global}$. For high-entropy (complex, variable, and hard-to-learn) characters, we relax the KL divergence target value $d_{targ}^c$ via an adaptive formula:
$d_{targ}^c = d_{targ}^{base} \cdot \text{clamp}(r_H, \delta_{min}^{KL}, \delta_{max}^{KL})$.
Subsequently, a proportional-integral controller~\cite{ziegler2019fine} is used to dynamically adjust the penalty coefficient $\beta_t^c$ based on the observed KL deviation over time:
\begin{align}
\small
e_t^{c} &= \text{clip}\left( \frac{D_{\text{KL}, t}}{d_{\text{targ}, t}^{c}} - 1, -\delta_{bound}, \delta_{bound} \right), \\ \beta_{t+1}^{c} &= \beta_t^{c} \cdot \left( 1 + K_p \cdot e_t^{c} \right).
\end{align}
where $e_t^c$ is the truncated error term, $\delta_{bound}$ is the error truncation threshold, and $K_p$ is the proportional gain coefficient. In summary, the instance-level mechanism tends to be conservative microscopically (suppressing high entropy), while the model-level mechanism tends to be aggressive macroscopically (relaxing high entropy), synergistically achieving stable and efficient style alignment. For a formal theoretical justification and detailed proofs regarding the Entropy-Aware Adaptive Exploitation mechanism, please refer to Appendix \ref{app:theoretical_analysis}.

\subsection{Contrastive Anchor Sampling}
In GRPO sampling, advantage estimation relies solely on samples generated by the current policy under the same prompt. This dependency often traps the model in local optima, leading to late-stage degeneration where the generated content drifts toward a generic, non-character style.
Therefore, we propose a \textbf{Contrastive Anchor Sampling} strategy. By forcibly injecting negative samples into the sampling group, we utilize implicit contrastive learning to prevent the policy from collapsing into a generic distribution. Specifically, we construct a mixed sampling group $\mathcal{G} = \{y_1, \dots, y_{G-1}, y_{anchor}\}$, where the first $G-1$ samples are generated normally by policy $\pi_{old}$, while $y_{anchor}$ is generated by system prompts with character settings removed, representing a ``non-character'' distribution. The introduction of negative anchors lowers the baseline for rewards, compelling the model to actively maximize the difference between character features and generic features. 
Since $y_{anchor}$ lacks specific character linguistic traits, its style reward $R_{anchor}^{style}$ is significantly lower than that of character-centric responses. 
By enforcing intra-group normalization on the style stream (after the global normalization in Section~\ref{sec: dual}), the presence of this low score amplifies the advantage values $A^{style}$ of samples with distinct character styles, thereby preventing the policy from collapsing towards a generic distribution during the optimization process.

\section{Experiments}

\subsection{Experimental Setup}

\paragraph{Benchmarks and Datasets} 
To rigorously assess the performance of our model, we employ two established public benchmarks. \textbf{\textit{(1) CharacterBench}} \citep{zhou2025characterbench}, a comprehensive generative evaluation suite, contains 22,859 samples spanning 3,956 distinct characters. It scrutinizes 11 specific dimensions, including memory consistency, personality adherence, and knowledge boundaries. \textbf{\textit{(2) SocialBench}} \citep{chen2024socialbench} targets social intelligence through a hybrid of multiple-choice and open-ended queries. This dataset includes 500 character profiles and over 30,800 multi-turn exchanges, evaluating aspects such as self-awareness and social preference. Experimentally, we utilize a subset of 650 random samples from CharacterBench for the training phase, while SocialBench is reserved exclusively for testing generalization capabilities. 

Refer to Appendix~\ref{app: bench} for extensive details.

\begin{table}[t]
\centering
\caption{Details of the specialized role-playing models.} 
\label{tab: model_specs} 
\setlength{\tabcolsep}{3pt} 
\resizebox{.48\textwidth}{!}{ 
\begin{tabular}{lcccc}
\toprule
  Model & Backbone & Data Scale & Method \\ %
\midrule
    CharacterGLM-6B~\citep{zhou2024characterglm} & ChatGLM2-6B & 1k & SFT \\
    Peach-9B-8k-Roleplay \citep{closedcharacter2024peach9b} & Yi-1.5-9B & 100k & SFT \\
    Llama-3.1-8B-RoleMRC \citep{lu2025rolemrc} & Llama-3.1-8B & 107.7k & SFT, DPO \\
    Haruhi-Zero-7B \citep{silkroad2024haruhizero} & Qwen-7B & 120k & SFT \\
    Crab \citep{he2025crab} & Llama-3.1-8B & 41k & SFT \\
    CoSER-Llama-3.1-8B \cite{wang2025coser} & Llama-3.1-8B & 29.7k & SFT \\
    Peach-2.0-9B-8k-Roleplay \cite{closedcharacter2024peach2} & Yi-1.5-9B & 100k & SFT, DPO \\
    Qwen2.5-7B-RoleMRC \citep{lu2025rolemrc} & Qwen2.5-7B & 107.7k & SFT, DPO \\ 
    Humanish-Roleplay \cite{vicgalle2024humanish} & Llama-3.1-8B & 269.9k & SFT, DPO 
    \\
    \midrule
     \textbf{CRPO (Ours)} & $-$ & 650 & RL \\
\bottomrule
\end{tabular}}
\vskip -0.2in
\end{table}

\paragraph{Baselines} 
To rigorously evaluate the efficacy of our proposed method, we benchmark it against four distinct categories of models: 
\textbf{\textit{(1) Role-playing methods}}, which employ specific fine-tuning strategies. This category includes SFT, Neeko~\citep{yu2024neeko}, and RAR~\citep{tang2025thinking}. Notably, Character-R1 ~\cite{tang2026characterr1} is highlighted as a pioneer in role-playing task for introducing verifiable rewards, encompassing cognitive focus and reference guidance. 
\textbf{\textit{(2) Specialized role-playing models}} trained on extensive character-centric corpora, specifically Character-GLM, Haruhi-Zero, CoSER, and RoleRMC (refer to Table~\ref{tab: model_specs} for specifications). 
\textbf{\textit{(3) General RL methods}}, comprising a suite of standard algorithms: PPO, REINFORCE++, RLOO, REMAX, GRPO, Dr.GRPO, DAPO, GSPO, GDPO, and OAR. Detailed configurations are provided in Table~\ref{tab:rl_methods_summary}. 
\textbf{\textit{(4) Large foundation models}}, serving as strong generalist baselines, including Llama-3-70B-Instruct~\citep{meta2024llama3}, GLM-4~\citep{glm2024chatglm}, Baichuan-NPC~\citep{yang2025baichuan2}, GPT-3.5-turbo, GPT-4o~\citep{openai2024gpt4o}, Qwen2.5-72B-Instruct~\citep{qwen2025qwen25}, Gemini-3-pro~\citep{deepmind2025gemini3}, and Claude-4-opus~\citep{anthropic2025claude4}.

\begin{table}[ht] 
\centering
\scriptsize
\caption{Summaries of common RL methods for RLHF.}
\label{tab:rl_methods_summary}
\setlength{\tabcolsep}{3pt} 
\begin{tabular}{lp{4.75cm}} 
\toprule
\textbf{Method} & \textbf{Key Contribution} \\
\midrule
PPO \cite{schulman2017ppo} & Stable policy gradient using clipping. \\
R++ \cite{hu2501reinforce++} & REINFORCE with PPO-style stability, no critic. \\
RLOO \cite{kool2019buy} & Batch-peer comparison for variance reduction. \\
ReMax \cite{li2024remax} & Greedy baseline RL without value networks. \\
GRPO \cite{shao2024deepseekmath} & Within-group advantage estimation to cut costs. \\
Dr.GRPO \cite{liu2025understanding} & Unnormalized GRPO to reduce bias. \\
DAPO \cite{yu2025dapo} & Dynamic sampling and asymmetric clipping. \\
GSPO \cite{zheng2025group} & Sequence-level optimization for stable alignment. \\
GDPO \cite{liu2026gdpo} & Generative flows for diversity and alignment. \\
OAR \cite{li2026outcome} & Outcome-aware fine-grained credit assignment. \\
\midrule
\textbf{\M (Ours)} & Character-centric optimization. \\
\bottomrule
\end{tabular}
\end{table}

\begin{table*}[t!]
\centering
\Large
\caption{Performance comparison of different methods on the CharacterBench. The best value for each metric is in \textbf{bold}, and the second-best value is \underline{underlined}. Complete experimental results are shown in Table~\ref{tab: main_characterbench_app}.}
\label{tab: main_characterbench}
\resizebox{.98\textwidth}{!}{
\begin{tabular}{lcccccccccccccc}
\toprule
\multicolumn{15}{l}{$MC$: Memory Consistency\quad $FA$: Fact Accuracy\quad $BC_K$: Boundary Consistency\quad $AC^b$: Attribute Consistency(Bot)}
\\
\multicolumn{15}{l}{$AC^h$: Attribute Consistency(Human)\quad $BC^b_P$: Behavior Consistency(Bot)\quad $BC^h_P$: Behavior Consistency(Human)\quad $HL$:Human-likeness} \\

\multicolumn{15}{l}{$ES$: Emotional Self-regulation\quad $ER$: Empathetic Responsiveness\quad $MS$: Morality Stability $MR$: Morality Robustness\quad $EG$:Engagement} \\
\midrule
  \multirow{2}{*}{\textbf{Method}} & \multicolumn{1}{c}{\textbf{Memory}} & \multicolumn{2}{c}{\textbf{Knowledge}} & \multicolumn{4}{c}{\textbf{Persona}} & \multicolumn{2}{c}{\textbf{Emotion}} & \multicolumn{2}{c}{\textbf{Morality}} & \multicolumn{2}{c}{\textbf{ Believability}} & \multirow{2}{*}{\textbf{Avg.}}\\
\cmidrule(lr){2-2}\cmidrule(lr){3-4}\cmidrule(lr){5-8}\cmidrule(lr){9-10}\cmidrule(lr){11-12}\cmidrule(lr){13-14}
  & $MC$ & $FA$ & $BC_K$ & $AC^b$ & $AC^h$ & $BC^b_P$ & $BC^h_P$ & $ES$ & $ER$ & $MS$ & $MR$ & $HL$ & $EG$ & \\
\midrule
        Qwen3-8B & 3.594  & 2.400  & 3.758  & 4.556  & 4.188  & 3.819  & 3.692  & 3.094  & 2.808  & 4.800  & 4.684  & 3.445  & 3.270  & 3.701   \\ 
        \quad\textit{+}SFT & 4.025  & 2.219  & 3.925  & 4.731  & 4.331  & 3.650  & 3.208  & 3.250  & 2.795  & 4.800  & 4.710  & 3.030  & 3.075  & 3.673   \\ 
        \quad\textit{+}Neeko & 3.991  & 2.202  & 3.830  & 4.709  & 4.132  & 3.725  & 3.439  & 3.280  & 2.830  & 4.853  & 4.767  & 3.112  & 3.182  & 3.696   \\ 
        \quad\textit{+}RAR & 3.892  & 2.537  & 4.129  & 4.485  & 4.320  & 4.342  & 3.646  & 3.135  & 2.781  & 4.873  & 4.693  & 3.150  & 3.144  & 3.779   \\ 
        \quad\textit{+}Character-R1 & 4.294  & 2.488  & 4.175  & 4.763  & 4.563  & 4.125  & 3.442  & 3.494  & 3.191  & 4.863  & 4.785  & 3.145  & 3.090  & 3.878   \\ 
        \quad\textit{+}PPO & 4.306  & 2.244  & 4.200  & 4.844  & 4.569  & 4.169  & 3.583  & 3.575  & 3.091  & 4.850  & 4.722  & 3.335  & 3.225  & 3.901   \\ 
        \quad\textit{+}REINFORCE++ & 4.100  & 2.419  & 4.092  & 4.700  & 4.344  & 3.806  & 3.217  & 3.213  & 2.783  & 4.900  & \underline{4.899}  & 3.500  & 3.211  & 3.783   \\ 
        \quad\textit{+}RLOO & 3.556  & 2.425  & 3.792  & 4.519  & 4.263  & 3.931  & 3.717  & 3.119  & 2.852  & 4.788  & 4.773  & 3.470  & 3.295  & 3.731   \\ 
        \quad\textit{+}ReMax & 3.781  & 2.425  & 3.792  & 4.450  & 4.206  & 3.638  & 3.583  & 2.963  & 2.783  & 4.875  & 4.760  & 3.475  & 3.275  & 3.693   \\ 
        \quad\textit{+}GRPO & 3.913  & 2.456  & 4.125  & 4.194  & 4.413  & 4.094  & 3.617  & 3.444  & 3.097  & 4.863  & 4.861  & 3.405  & 3.295  & 3.829   \\ 
        \quad\textit{+}Dr.GRPO & 3.869  & 2.574  & 4.092  & 3.681  & 4.406  & 4.131  & 3.600  & 3.625  & 3.153  & 4.825  & 4.760  & 3.050  & 3.100  & 3.759   \\ 
        \quad\textit{+}DAPO & 4.125  & 2.444  & 4.100  & 4.250  & 4.413  & 4.031  & 3.208  & 3.488  & 3.172  & 4.900  & 4.773  & 3.270  & 3.150  & 3.794   \\ 
        \quad\textit{+}GSPO & 4.344  & 2.481  & 4.125  & 4.575  & 4.550  & 4.169  & 3.608  & \underline{3.684}  & 3.172  & 4.750  & 4.659  & 3.255  & 3.255  & 3.894   \\ 
        \quad\textit{+}GDPO & 4.425  & 2.400  & 4.000  & 4.775  & 4.550  & 4.075  & 3.367  & 3.594  & 3.147  & 4.875  & 4.785  & 3.100  & 3.090  & 3.860   \\ 
        \quad\textit{+}OAR & \underline{4.450}  & 2.369  & 4.092  & 4.944  & 4.588  & 4.125  & 3.333  & 3.475  & 3.210  & 4.913  & 4.849  & 3.040  & 3.075  & 3.882   \\ 
        \quad\textit{+}\textbf{\M} & \textbf{4.525}  & 2.581  & \underline{4.308}  & \textbf{4.994}  & \textbf{4.781}  & \textbf{4.469}  & 3.717  & \textbf{3.819}  & \textbf{3.354}  & 4.925  & 4.659  & 3.300  & 3.130  & \textbf{4.043}   \\ 

        \quad\quad\textit{w/o} Dual-Stream Advantage & 4.150  & 2.494  & 4.067  & 4.294  & 4.338  & 4.175  & 3.775  & 3.513  & 3.228  & 4.888  & 4.697  & 3.010  & 2.915  & 3.811   \\ 
        \quad\quad\textit{w/o} Adaptive Exploitation & 4.363  & 2.553  & 4.253  & 4.969  & \underline{4.769}  & \underline{4.463}  & 3.717  & 3.550  & 3.241  & 4.875  & 4.635  & 3.015  & 3.110  & \underline{3.962}   \\ 
        \quad\quad\textit{w/o} Contrastive Anchor & 4.444  & 2.488  & 4.192  & \textbf{4.994}  & 4.569  & 4.219  & 3.417  & 3.663  & \underline{3.317}  & 4.888  & 4.823  & 3.215  & 3.025  & 3.942   \\

        \midrule 

        Llama-3.2-3B-Instruct & 3.588  & 2.088  & 3.892  & 4.606  & 4.300  & 4.025  & 3.300  & 3.306  & 3.053  & 4.840  & 4.649  & 2.890  & 3.320  & 3.681   \\ 
        \quad\textit{+}SFT & 3.606  & 2.063  & 3.892  & 4.550  & 4.169  & 3.894  & 3.500  & 3.219  & 3.003  & 4.888  & 4.823  & 3.350  & 3.310  & 3.713   \\ 
        \quad\textit{+}Neeko & 3.796  & 1.855  & 3.816  & 4.513  & 3.428  & 3.416  & 3.192  & 3.319  & 2.747  & 4.561  & 4.582  & 2.665  & 2.972  & 3.451   \\ 
        \quad\textit{+}RAR & 3.848  & 2.010  & 3.978  & 4.511  & 4.373  & 3.918  & \textbf{4.112}  & 3.240  & 2.910  & 4.785  & 4.743  & 2.718  & 2.964  & 3.701   \\ 
        \quad\textit{+}Character-R1 & 4.275  & 2.144  & 3.983  & 4.950  & 4.556  & 4.038  & 3.350  & 3.463  & 3.134  & 4.850  & 4.748  & 2.940  & 2.970  & 3.800   \\ 
        \quad\textit{+}PPO & 3.888  & \underline{2.750}  & 4.108  & 4.506  & 4.306  & 3.231  & 3.292  & 2.594  & 2.619  & 4.800  & \textbf{5.000}  & 2.220  & 2.405  & 3.517   \\ 
 
        \quad\textit{+}GSPO & 4.238  & 2.156  & 4.150  & 4.838  & 4.450  & 3.806  & 3.208  & 3.444  & 3.041  & 4.825  & 4.811  & 2.975  & 2.940  & 3.760   \\ 
        \quad\textit{+}GDPO & 4.319  & 2.263  & 3.933  & 4.850  & 4.300  & 3.800  & 3.233  & 3.475  & 3.134  & 4.868  & 4.886  & 2.850  & 2.855  & 3.751   \\ 
        \quad\textit{+}OAR & 4.206  & 2.175  & 4.175  & 4.844  & 4.369  & 3.844  & 3.167  & 3.581  & 3.097  & 4.868  & 4.736  & 2.875  & 2.970  & 3.762   \\ 
        \quad\textit{+}\textbf{\M} & 4.356  & 2.113  & 4.175  & 4.988  & 4.619  & 4.038  & 3.483  & 3.581  & 3.166  & 4.888  & 4.873  & 2.750  & 2.785  & 3.832   \\ 

        \midrule  
        CharacterGLM-6B & 3.245  & 2.100  & 3.543  & 3.365  & 3.410  & 3.070  & 3.100  & 2.610  & 2.500  & 4.480  & 4.800  & 2.840  & 2.700  & 3.213  \\ 
        Peach-9B-8k-Roleplay & 2.931  & 2.344  & 3.717  & 3.800  & 3.388  & 3.300  & 3.292  & 2.863  & 2.802  & 4.838  & 4.798  & 2.475  & 2.435  & 3.306  \\ 
        Llama-3.1-8B-RoleMRC & 4.169  & 2.163  & 3.717  & 4.588  & 4.231  & 3.606  & 3.225  & 3.125  & 2.864  & 4.800  & 4.748  & 3.040  & 3.060  & 3.641  \\ 
        Haruhi-Zero-7B & 4.088  & 2.150  & 3.583  & 4.525  & 4.256  & 3.613  & 3.192  & 3.456  & 2.965  & 4.850  & 4.773  & 2.990  & 3.065  & 3.654  \\ 
        Crab & 4.131  & 2.175  & 3.600  & 4.650  & 4.250  & 3.681  & 3.167  & 3.331  & 2.965  & 4.850  & 4.684  & 3.050  & 3.110  & 3.665  \\ 
        CoSER-Llama-3.1-8B & 4.056  & 2.113  & 3.958  & 4.506  & 4.213  & 3.700  & 3.250  & 3.156  & 2.883  & \textbf{4.950}  & {4.874}  & 3.045  & 2.970  & 3.667  \\ 
        Peach-2.0-9B-8k-Roleplay & 2.819  & 2.400  & 3.425  & 4.400  & 4.263  & 4.088  & \underline{3.867}  & 3.350  & 3.028  & 4.475  & 4.470  & {3.485}  & \underline{3.680}  & 3.673  \\ 
        Qwen2.5-7B-RoleMRC & 4.175  & 2.231  & 3.775  & {4.713}  & 4.300  & 3.688  & 3.225  & 3.188  & 2.927  & 4.750  & 4.735  & 3.000  & 3.105  & 3.678  \\ 
        Humanish-Roleplay & 4.013  & 2.350  & {4.000}  & 4.694  & {4.481}  & 3.975  & 3.267  & 3.444  & 3.053  & 4.750  & 4.823  & 3.010  & {3.145}  & 3.770  \\ 
        \midrule 
        CharacterGLM & 3.760  & 2.180  & 3.970  & 4.030  & 3.800  & 3.260  & 2.890  & 2.940  & 2.640  & 4.530  & 4.510  & 3.160  & 3.320  & 3.461  \\ 
        Llama-3-70B-Instruct & 3.940  & 2.590  & 3.950  & 4.390  & 3.960  & 3.330  & 3.350  & 3.060  & 2.890  & 4.710  & 4.740  & 3.400  & 3.510  & 3.678  \\ 
        GLM-4 & 3.524  & 2.373  & 3.701  & 4.380  & 4.103  & 3.728  & 3.487  & 3.130  & 2.987  & 4.826  & 4.876  & 3.208  & 3.500  & 3.679  \\ 
        Baichuan-NPC & 3.672  & 2.134  & 4.132  & 4.254  & 4.216  & 4.022  & 3.366  & 3.001  & 3.179  & 4.830  & 4.897  & 2.975  & 3.297  & 3.690  \\ 
        GPT-3.5-turbo & 3.490  & 2.451  & 3.692  & 4.345  & 4.155  & 3.635  & 3.560  & 3.090  & 2.838  & 4.735  & 4.761  & \textbf{3.619}  & \textbf{3.758}  & 3.702  \\ 
        GPT-4o & 3.793  & 2.647  & 3.978  & 4.484  & 4.027  & 3.723  & 3.414  & 3.046  & 2.974  & 4.763  & 4.771  & 3.261  & 3.510  & 3.722  \\ 
        Qwen2.5-72B-Instruct & 4.060  & 2.558  & 4.102  & 4.531  & 3.222  & 3.917  & 3.439  & 3.398  & 2.962  & 4.897  & 4.815  & 3.512  & 3.418  & 3.756  \\ 
        Gemini-3-pro & 3.946  & \textbf{2.905}  & 4.099  & 4.167  & 4.133  & 3.671  & 3.529  & 3.319  & 2.973  & 4.820  & 4.633  & \underline{3.613}  & 3.423  & 3.787  \\ 
        Claude-4-opus & 3.997  & 2.435  & \textbf{4.398}  & 4.508  & 4.362  & 3.877  & 3.751  & 3.587  & 3.176  & \underline{4.936}  & 4.735  & 3.151  & 3.351  & 3.866 \\

\bottomrule
\end{tabular}}
\vskip -0.2in
\end{table*}

\begin{table*}[t!]
\centering
\scriptsize
\caption{Performance comparison on SocialBench. This benchmark evaluates role-play dialogue comprehension. Scores for large foundation models are excluded here due to parameter-size bias; see Table~\ref{tab: main_socialbench_app} for full results.}
\label{tab: main_socialbench}
\resizebox{.95\textwidth}{!}{
\begin{tabular}{lcccccccccc}
\toprule

\multicolumn{11}{l}{Sty.: Role Style\quad Konw.: Role Knowledge\quad SU: Situational Understanding\quad ED: Emotion Detection\quad HSD: Humor Sarcasm Detect}
\\
\multicolumn{11}{l}{MEM: Long-Term Conversation Memory\quad Neu., Pos., Neg.: Social Preference}
\\
\midrule

   \textbf{Method}  & \textbf{Know.} & \textbf{Sty.} & \textbf{ED} & \textbf{SU} & \textbf{HSD} & \textbf{MEM}  & \textbf{Neu.} & \textbf{Pos.} & \textbf{Neg.} & \textbf{Avg.} \\
\midrule

        Qwen3-8B & 0.938  & 0.888  & 0.457  & 0.230  & 0.740  & 0.918  & \underline{0.926}  & \textbf{0.955}  & \textbf{0.870}  & 0.769   \\ 
        \quad\textit{+}SFT & 0.913  & 0.814  & 0.487  & 0.230  & 0.810  & 0.855  & 0.881  & 0.896  & 0.788  & 0.741   \\ 
        \quad\textit{+}Neeko & 0.924  & 0.804  & 0.435  & 0.267  & 0.688  & 0.785  & 0.866  & 0.925  & 0.775  & 0.719   \\ 
        \quad\textit{+}RAR & 0.911  & 0.832  & 0.485  & 0.250  & 0.807  & 0.929  & 0.889  & 0.915  & 0.817  & 0.760   \\ 
        \quad\textit{+}Character-R1 & 0.949  & 0.884  & 0.449  & 0.280  & 0.760  & 0.941  & 0.919  & 0.942  & 0.854  & 0.775   \\ 
        \quad\textit{+}PPO & 0.942  & 0.876  & \underline{0.489}  & 0.280  & 0.860  & 0.915  & 0.913  & 0.938  & 0.826  & 0.782   \\ 
        \quad\textit{+}DAPO & 0.931  & 0.870  & 0.462  & 0.256  & 0.772  & 0.881  & 0.916  & 0.939  & 0.831  & 0.762   \\ 
        \quad\textit{+}GSPO & 0.942  & \underline{0.903}  & 0.463  & 0.280  & 0.680  & 0.934  & 0.916  & 0.946  & 0.831  & 0.766   \\ 
        \quad\textit{+}GDPO & \underline{0.951}  & 0.896  & 0.452  & 0.280  & 0.720  & \underline{0.943}  & 0.913  & 0.946  & 0.820  & 0.769   \\ 
        \quad\textit{+}OAR & 0.932  & 0.888  & \textbf{0.527}  & 0.280  & 0.810  & 0.941  & 0.922  & 0.938  & 0.848  & \underline{0.787}   \\ 
        \quad\textit{+}\M & \textbf{0.966}  & \textbf{0.922}  & 0.488  & 0.350  & 0.790  & \textbf{0.970}  & \textbf{0.927}  & \textbf{0.955}  & 0.852  & \textbf{0.802}   \\

        \midrule
        Peach-9B-8k-Roleplay & 0.580  & 0.253  & 0.408  & 0.313  & 0.670  & 0.464  & 0.609  & 0.671  & 0.484  & 0.494  \\
        Haruhi-Zero-7B & 0.742  & 0.715  & 0.290  & 0.408  & 0.570  & 0.431  & 0.577  & 0.722  & 0.291  & 0.527  \\
        CoSER-Llama-3.1-8B & 0.822  & 0.622  & 0.394  & 0.458  & 0.750  & 0.698  & 0.506  & 0.734  & 0.159  & 0.571  \\
        CharacterGLM-6B & 0.747 & 0.794 & 0.262 & 0.412 & 0.811 & 0.682 & 0.844 & 0.704 & 0.363 & 0.625  \\
        Crab & 0.779  & 0.641  & 0.389  & \underline{0.525}  & 0.720  & 0.591  & 0.699  & 0.823  & 0.467  & 0.626  \\
        Llama-3.1-8B-RoleMRC & 0.816  & 0.749  & 0.424  & 0.338  & 0.650  & 0.657  & 0.756  & 0.819  & 0.654  & 0.651  \\
        Peach-2.0-9B-8k-Roleplay & 0.893  & 0.728  & {0.467}  & 0.263  & \underline{0.870}  & 0.484  & 0.837  & 0.903  & \underline{0.857}  & 0.700  \\
        Qwen2.5-7B-RoleMRC & 0.868  & 0.757  & {0.434}  & \textbf{0.683}  & 0.840  & 0.459  & 0.814  & 0.831  & 0.764  & 0.717  \\
        Humanish-Roleplay & 0.899  & {0.808}  & 0.343  & 0.292  & \textbf{0.920}  & {0.842}  & 0.843  & 0.899  & 0.742  & 0.732 \\
\bottomrule
\end{tabular}}
\vskip -0.1in
\end{table*}

\paragraph{Implementation}
We implement CRPO using the EasyR1 framework \citep{zheng2025easyr1} with Llama-3.2-3B-Instruct~\citep{meta2024llama3} and Qwen3-8B~\citep{yang2025qwen3} as backbones. Further implementation details can be found in Appendix~\ref{app: imple_detail}. 


\subsection{Main Results}

\paragraph{Performance on Comprehensive Role-Play Dialogue Generation} 
As shown in Table~\ref{tab: main_characterbench}, \M consistently outperforms all baselines across both Llama-3.2-3B and Qwen3-8B backbones, establishing a new state-of-the-art on CharacterBench.
Broadly speaking, compared to SFT baselines such as Neeko and RAR, RL methods represented by \M demonstrate remarkable data efficiency, eliciting superior role-playing capabilities with only a fraction of the training data. Notably, \M and certain advanced RL baselines even surpass proprietary large foundation models in specific metrics. This validates the efficiency of our character-centric optimization objective and highlights the immense potential of RL in vertical domains.
 
First, regarding \textit{Persona Consistency}, \M overcomes the \textit{alignment tax} observed in general RL methods, where models sacrifice stylistic diversity for logical correctness, often degenerating into generic assistants. By employing Dual-Stream Advantage, \M decouples style from logic, maintaining high fidelity without being penalized for the inherent entropy of role-playing. Second, regarding \textit{Knowledge Boundaries}, \M significantly reduces hallucinations compared to SFT. Unlike standard RL prone to fabricating plausible facts, our Entropy-Aware Adaptive Exploitation dynamically adjusts KL constraints based on role difficulty, preventing the model from forcibly fitting unknown knowledge. Furthermore, \M achieves a superior trade-off between \textit{Morality Robustness} and \textit{Empathetic Responsiveness}, surpassing large models (e.g., GPT-4o) in emotional regulation. Detailed analysis is given in Appendix~\ref{app: detailed_main}.

\paragraph{Performance on Generalized Role-Play Dialogue Understanding}
Table~\ref{tab: main_socialbench} presents the evaluation on SocialBench. Since \M was not trained on this dataset, these results reflect its generalization capability and cognitive depth.\M achieves significant gains in SU and HSD, surpassing the second-best method by a distinct margin. This indicates that our method goes beyond surface-level style mimicry (behaviorism) to internalize the character's cognitive patterns (cognitivism).Traditional RL baselines like GSPO and Dr.GRPO perform well on rigid tasks like \textit{Memory}, but struggle with SU, likely due to their \textit{problem-centric} optimization that treats all deviations as errors. 

\subsection{Ablation Studies}

\begin{figure*}[t!]
    \centering
    \begin{subfigure}[b]{0.24\textwidth}
        \centering
        \includegraphics[width=\textwidth]{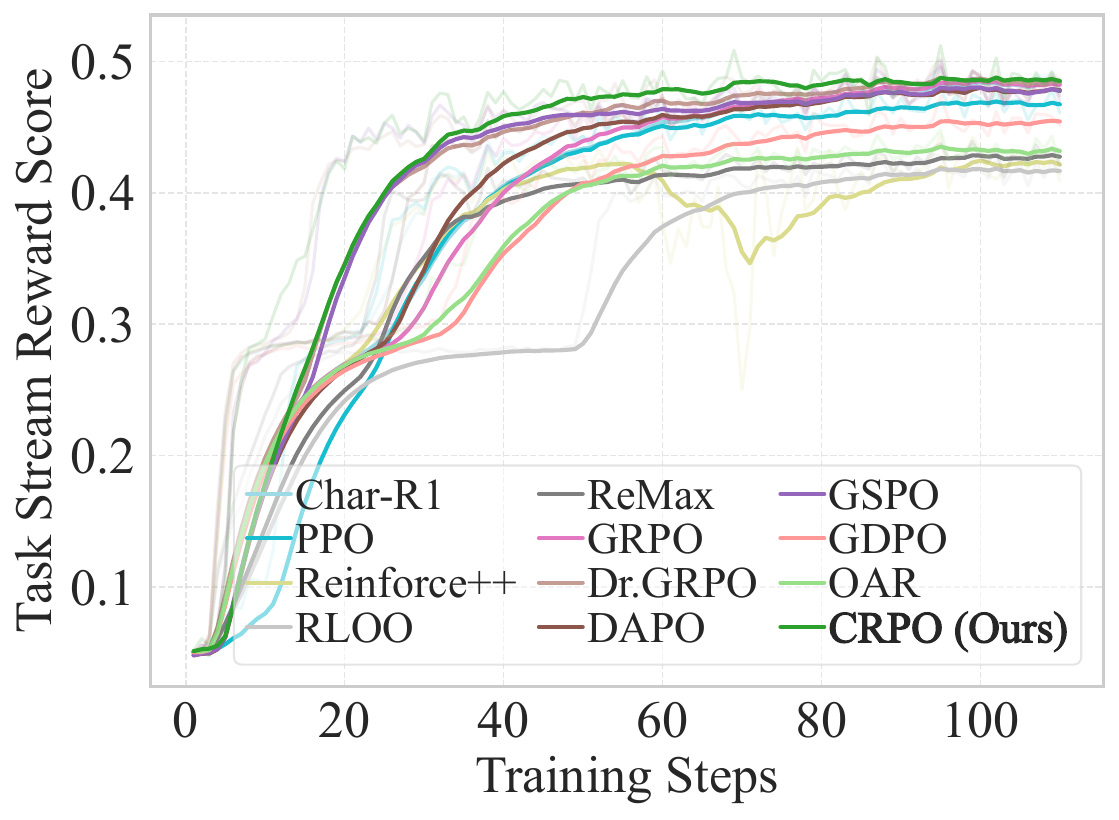} 
        \caption{Task Stream Reward Score}
        \label{fig:task_reward}
    \end{subfigure}
\hfill
    \begin{subfigure}[b]{0.24\textwidth}
        \centering
        \includegraphics[width=\textwidth]{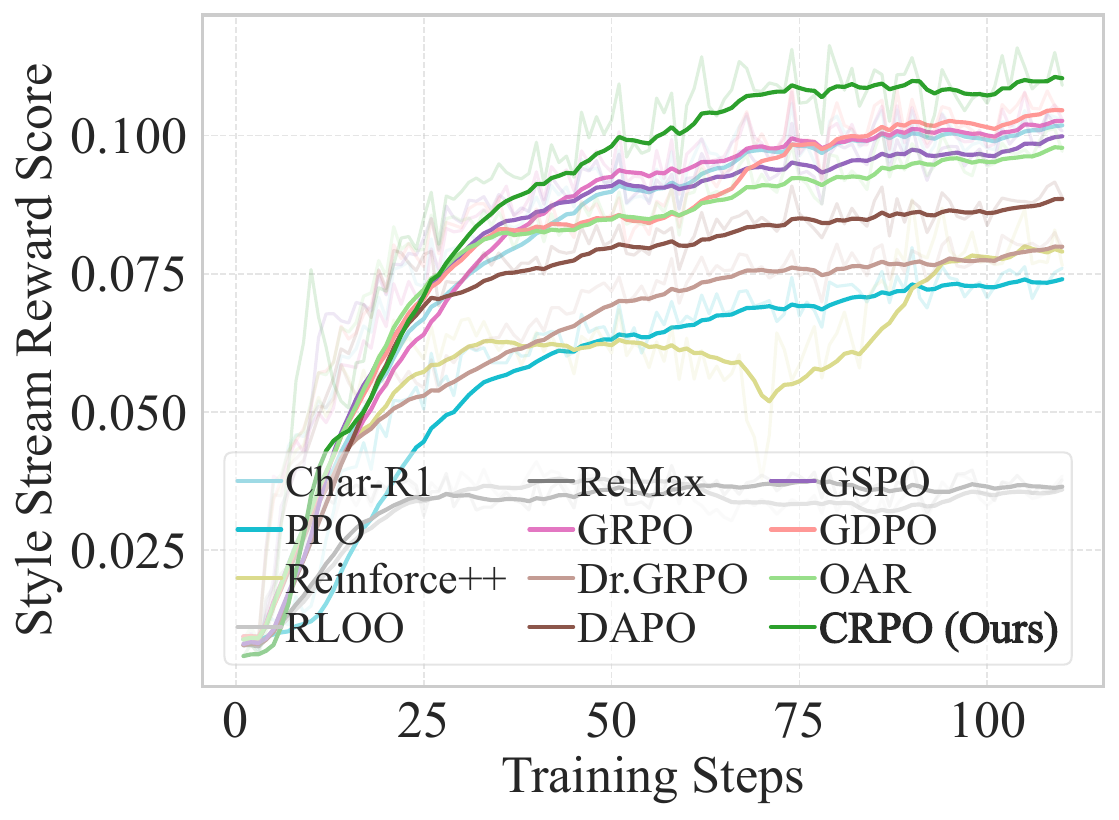}
        \caption{Style Stream Reward Score}
        \label{fig:style_reward}
    \end{subfigure}
\hfill
    \begin{subfigure}[b]{0.24\textwidth}
        \centering
        \includegraphics[width=\textwidth]{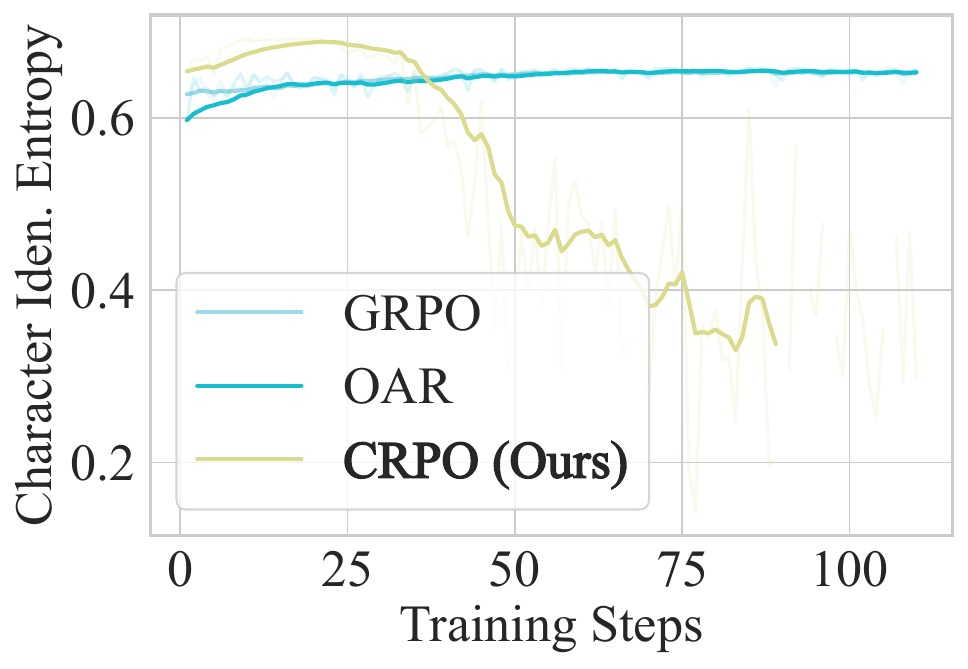}
        \caption{Character Ident. Entropy}
        \label{fig:inden_entropy}
    \end{subfigure}
\hfill
    \begin{subfigure}[b]{0.24\textwidth}
        \centering
        \includegraphics[width=\textwidth]{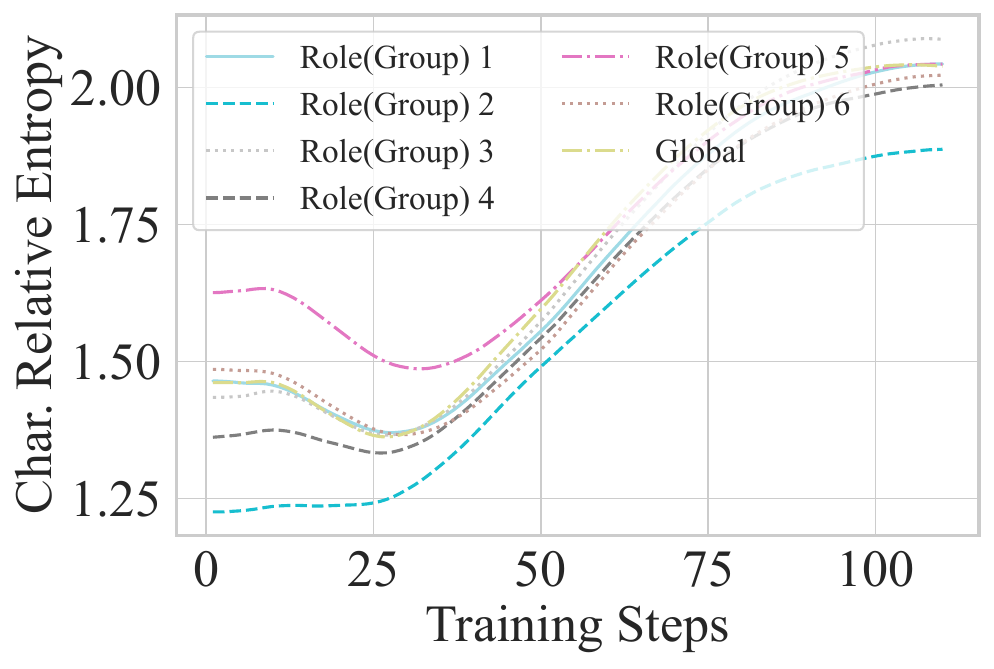}
        \caption{Character Relative Entropy}
        \label{fig:role_entropy}
    \end{subfigure}
    
    \caption{
    Analysis of training dynamics. 
    }
    \label{fig: further_analysis_all}
\vskip -0.2in
\end{figure*}

We investigate the contribution of each component in Table \ref{tab: main_characterbench}. The results confirm the necessity of our tripartite design: 
\textbf{(1) w/o Dual-Stream Advantage:} Removing the proposed dual-stream mechanism leads to a sharp decline in \textit{Believability} and \textit{Persona}.
Without decoupling, the interference between logical and stylistic gradients drives the model to prioritize easier-to-optimize logical rewards, resulting in the style collapse;
\textbf{(2) w/o Adaptive Exploitation:} Ablating this primarily impacts \textit{Knowledge} and \textit{Memory}. Character-agnostic KL constraints fail to address \textbf{role heterogeneity}: rigid constraints hinder exploration for complex (high-entropy) roles, while loose constraints induce catastrophic forgetting of general capabilities in simple (low-entropy) ones;
\textbf{(3) w/o Contrastive Anchor:} This ablation causes the most significant drop in \textit{Engagement}. Lacking explicit negative feedback against the generic mode, the policy fails to escape the safe but boring center of the pre-trained distribution, limiting the diversity of character expression.

\begin{figure}[htbp]
\centering
\includegraphics[width=.43\textwidth]{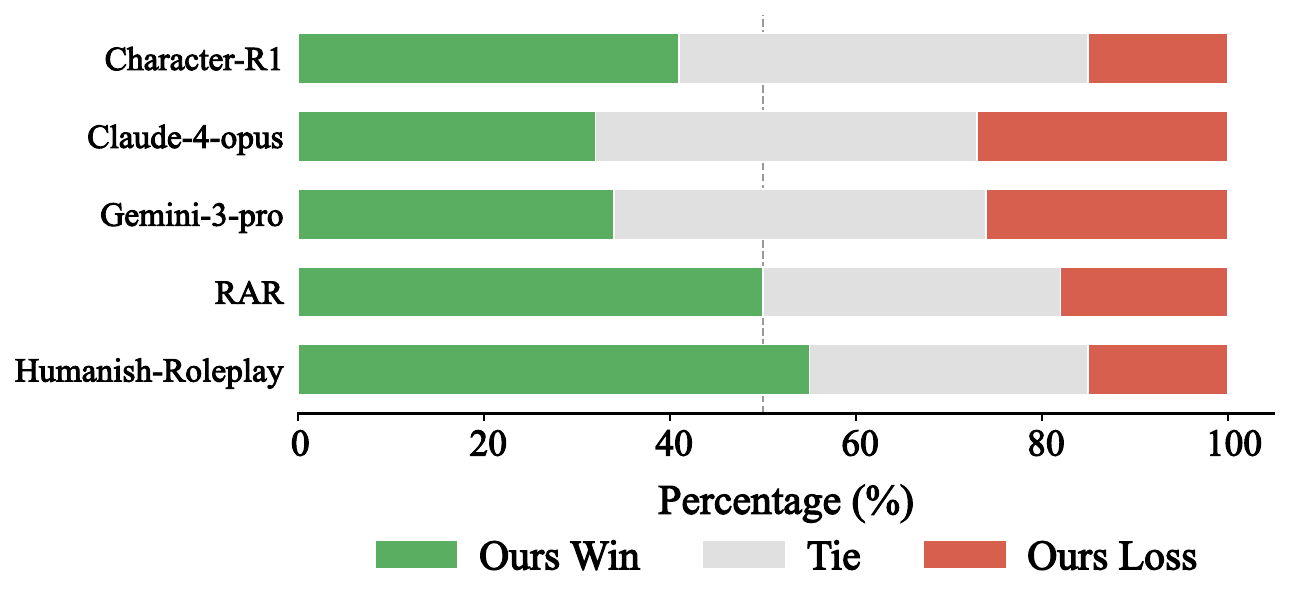}
\caption{Human evaluation results.}
\label{fig: human_eval}
\vskip -0.2in
\end{figure}

\subsection{Human Evaluation}
To validate real-world utility, we conducted a blinded, pairwise human evaluation involving 4 experts across 100 interaction sessions. As shown in Figure~\ref{fig: human_eval}, \M achieves leading performance, significantly outperforming baselines under a strict criterion requiring superiority in both \textit{Knowledge} and \textit{Style}. Detailed analysis are provided in Appendix~\ref{app: human_eval}.

\subsection{Case Study}
To intuitively demonstrate the efficacy of \M, we present two representative cases in Appendix~\ref{app: case_study}. In addition, we include a condensed version of the Xiaoming case in Table~\ref{tab: condensed_xiaoming_case}. The full multi-turn dialogue and more detailed comparisons are provided in Appendix~\ref{app: case_study}. As shown in the Xiaoming case, when facing a logical trap regarding schedule flexibility, \M successfully activates the \texttt{<focus>Memory</focus>} tag to retrieve the hard constraint (``I don't have any holidays''), avoiding the hallucination observed in OAR, which follows the user's false premise and describes the business as ``flexible''. Character-R1 provides a logically acceptable but generic response, while \M explicitly grounds its reply in the established socioeconomic constraint of the character. This validates the effectiveness of our Entropy-Aware Adaptive Exploitation in maintaining strict knowledge boundaries for realistic personas. Similarly, in the fictional ``Cake'' case (Table~\ref{tab: case_cake}), \M not only precisely recalls specific lore (``fresh cream fruit cake'') but also synthesizes character-congruent actions (``smirking''). This highlights the contribution of the Dual-Stream Advantage, which ensures that stylistic rendering is optimized alongside logical correctness without trade-offs.

\begin{table}[t]
\centering
\small
\caption{
A condensed case study of Xiaoming. The user poses a leading question that falsely assumes schedule flexibility.
}
\label{tab: condensed_xiaoming_case}
\setlength{\tabcolsep}{3.5pt}
\renewcommand{\arraystretch}{1.12}
\begin{tabular}{p{0.23\linewidth}p{0.71\linewidth}}
\toprule
\textbf{Component} & \textbf{Content} \\
\midrule
Profile Summary &
Xiaoming is a 20-year-old girl born in Ho Chi Minh City, Vietnam. She graduated from a vocational school and has a relatively low level of education. \\
\midrule
Context Summary &
Xiaoming struggles financially to support herself. In previous turns, she explicitly stated a hard constraint: ``I don't have any holidays.'' \\
\midrule
User's Trap Query &
``Then how come you don't have time? Isn't it quite flexible?'' \\
\midrule
Reference Response &
``Freedom comes at a price. I have to take care of my business... there's never enough time.'' \\
\midrule
OAR &
\texttt{<focus>Engagement</focus>} \ldots \quad
``Yes, my business is flexible, but I have to work a lot.'' \\
\midrule
Character-R1 &
\ldots \quad
``(smiles) I have to work every day to support myself.'' \\
\midrule
\M{} &
\texttt{<focus>Memory</focus>} \texttt{<focus\_attr>I don't have any holidays.</focus\_attr>} \ldots \quad
``I don't have any holidays. I need to work every day to support myself!'' \\
\bottomrule
\end{tabular}
\end{table}

\subsection{Further Analysis}

\paragraph{Effectiveness of Dual-Stream Advantage} 
To empirically validate the necessity of our decoupling strategy, we analyze the learning curves of the Task Stream (Figure \ref{fig:task_reward}) and Style Stream (Figure \ref{fig:style_reward}) independently. \M achieves superior convergence rates and higher asymptotic ceilings across both streams. Notably, while GDPO—which also employs a reward decoupling mechanism—demonstrates a strong lead in the Task Stream, it encounters significant stagnation in the Style Stream. This discrepancy highlights the critical advantage of \M's character-centric global normalization. Unlike GDPO's approach, which may dilute stylistic signals through cross-group comparisons, \M normalizes style rewards against the character's own historical performance. 
Complete reward curves are shown in Figures~\ref{fig: reward_curve_all_1} and \ref{fig: reward_curve_all_2}.


\begin{figure}[thb]
\centering
\includegraphics[width=.85\linewidth]{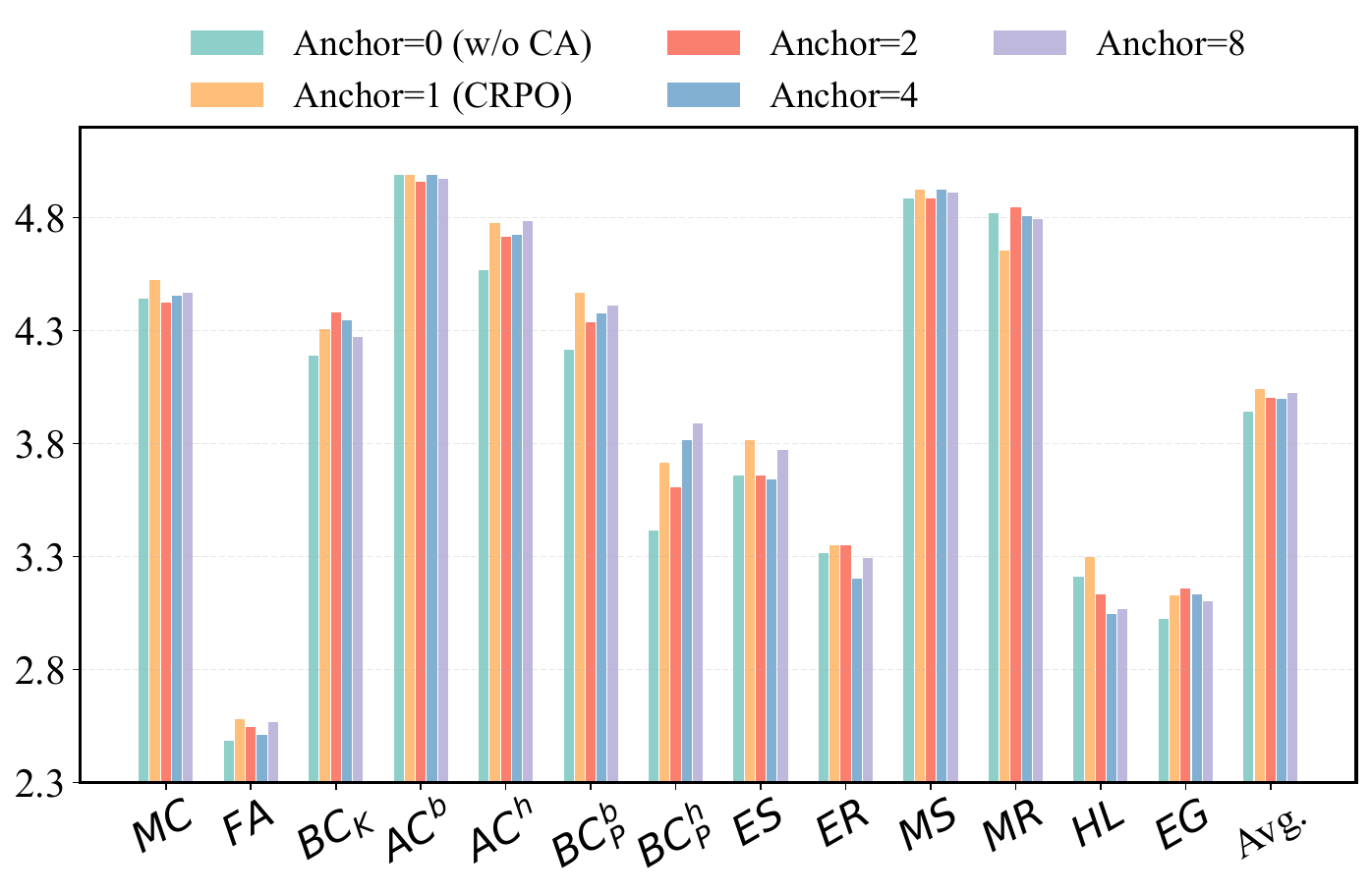}
\caption{
Impact of $y_{anchor}$ quantity on the CharacterBench.
}
\label{fig: y_anchor}
\end{figure}

\paragraph{Dynamics of Character Identification Entropy}
Figure \ref{fig:inden_entropy} illustrates the evolutionary dynamics of training stability. \M exhibits a distinct high-exploration, sharp-convergence phase transition. The initially high $H_{id}$ validates our \textit{cautious exploration} strategy driven by instance-level weighting, preventing early mode collapse. The subsequent precipitous drop signifies a successful persona \textit{lock-in}, where the model internalizes the character's voice. In contrast, OAR, which relies solely on token-level entropy reshaping, maintains a consistently high and flat entropy profile. This implies persistent model uncertainty and a failure to converge toward a distinct, stable character.

\paragraph{Heterogeneity of Role-Specific Entropy}
Figure \ref{fig:role_entropy} depicts the relative entropy trajectories across distinct role groups, providing empirical evidence for Role Heterogeneity. The substantial divergence between groups confirms that intrinsic fitting difficulty varies significantly across characters. This observation exposes the limitations of static, non-character-centric KL coefficients used in standard methods and validates the necessity of \M's adaptive mechanism. By dynamically relaxing constraints for high-entropy roles while tightening them for low-entropy ones relative to the reference model, \M could achieve an optimal balance between plasticity (learning new traits) and stability (preventing catastrophic forgetting).

\paragraph{Impact of $y_{anchor}$ Quantity on CRPO}
Figure~\ref{fig: y_anchor} shows that introducing a single contrastive anchor yields a significant performance boost compared to the baseline, validating the effectiveness of the contrastive mechanism. However, further increasing the number of anchors provides no additional benefit and even leads to slight performance drops. Since generating multiple anchors incurs extra computational cost without improving results, we conclude that using a single anchor is the most efficient choice.

\begin{figure}[htb]
    \centering
    \begin{subfigure}[b]{0.23\textwidth}
        \centering
        \includegraphics[width=\textwidth]{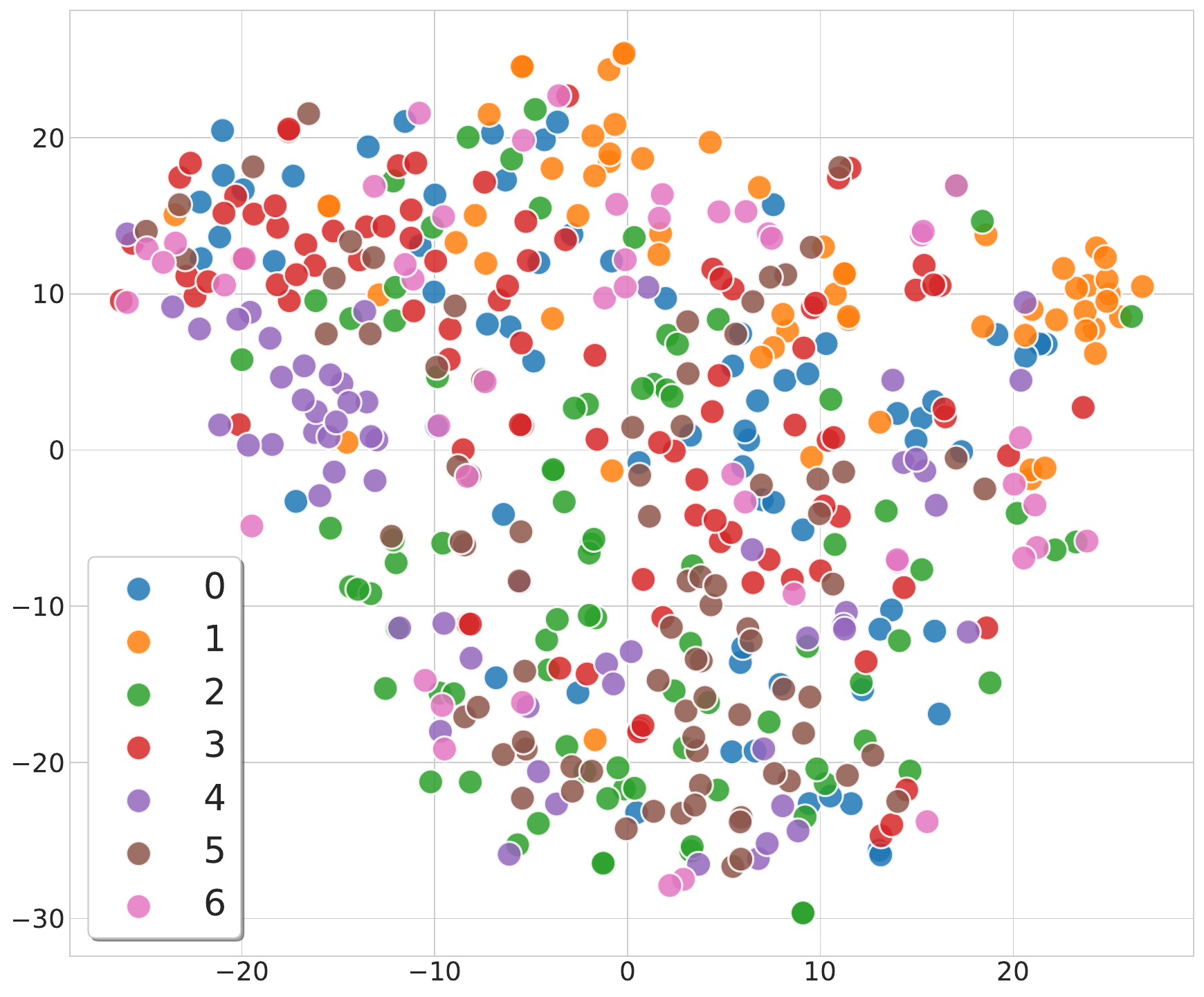} 
        \caption{Qwen3-8B-GRPO}
    \end{subfigure}
\hfill
    \begin{subfigure}[b]{0.23\textwidth}
        \centering
        \includegraphics[width=\textwidth]{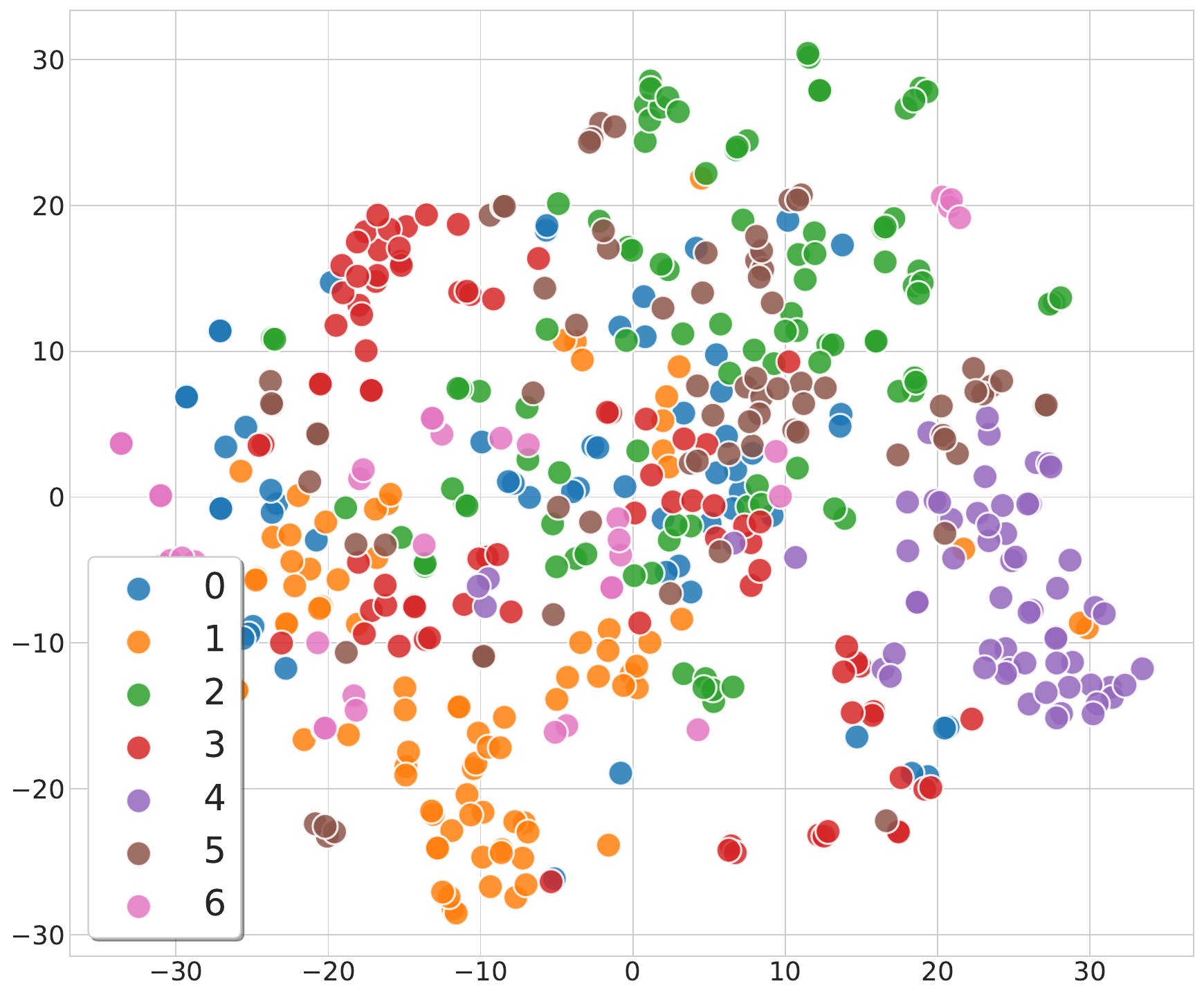}
        \caption{Qwen3-8B-CRPO (Ours)}
    \end{subfigure}
    \caption{
    The t-SNE visualization of hidden states of the model under different role group.
    }
    \label{fig: analysis_tsne}
\vskip -0.1in
\end{figure}

\paragraph{Visualization of Character Representations} 

We visualize the latent representations of responses across different role groups using t-SNE (Figure \ref{fig: analysis_tsne}) \cite{maaten2008visualizing}. Compared to the entangled clusters in GRPO which indicate style collapse, \M demonstrates clear inter-class separability and compact clustering. This distinct separation confirms that our contrastive and dual-stream mechanisms effectively force the model to internalize specific persona patterns rather than reverting to homogeneous representations.

\section{Conclusion}
In this paper, we introduce CRPO, a character-centric RL framework that shifts the paradigm from behaviorist imitation to cognitive reasoning. By decoupling task and style rewards, adapting to character entropy, and enforcing contrastive stylization, CRPO successfully addresses the limitations of \textit{problem-centric} RL methods. Extensive experiments confirm that CRPO achieves state-of-the-art performance, bridging the gap between logic reasoning and immersive role-playing.

\section*{Acknowledgements}
This work was supported in part by the Science Fund for Creative Research Groups of the National Natural Science Foundation of China under Grant 62521006, in part by the National Natural Science Foundation of China (62276077, U23B2055, 62350710797), in part by Guangdong S\&T Program (2024B0101050003), in part by the Guangdong Basic and Applied Basic Research Foundation (2024A1515011205), and in part by Shenzhen Science and Technology Program (KQTD20240729102154066).



\section*{Impact Statement}

This paper presents work whose goal is to advance the field of Machine Learning, specifically optimizing the alignment and consistency of Role-playing Agents. Our proposed method, CRPO, aims to enhance the controllability and reliability of language models. We believe that improving the fidelity and alignment of such agents can positively contribute to user experiences in applications such as interactive entertainment and personalized education. While the broader deployment of generative AI carries general societal considerations—such as ensuring content safety—our work focuses on algorithmic improvements to better align models with human intent. There are many potential societal consequences of our work, none which we feel must be specifically highlighted here.


\bibliography{example_paper}
\bibliographystyle{icml2026}

\newpage
\appendix
\onecolumn
\section{Appendix}

\section{Method Details}
\label{app: method}
Detailed role-playing prompt for training can be found in Figure~\ref{fig: role_playing_prompt}.
The description of focus dimensions can be found in Table~\ref{tab: focus_dimensions}. Following Character-R1 \cite{tang2026characterr1}, these focus dimensions correspond to the evaluation dimensions in CharacterBench and thus have natural annotations.

\begin{figure*}[htbp]
\begin{tcolorbox}[width=1\textwidth]

\{Character Profile\}

You FIRST think about the reasoning process as an internal monologue and then provide the final answer. The reasoning process MUST BE enclosed within $<$think$>$ $<$/think$>$ tags. During the reasoning process, you can use a type of tool called focus, which will help you concentrate on one or more specific goals that should receive special attention in the current conversation. Each focus consists of two fields: focus and focus\_attr, which should be enclosed within $<$focus$>$ $<$/focus$>$ and $<$focus\_attr$>$ $<$/focus\_attr$>$. focus indicates the target of attention, and focus\_attr represents the attributes or related content under the target. The available focus and their corresponding descriptions are as follows:

\{Description of Focus Dimensions\}

The final actual response MUST BE put in \textbackslash boxed\{\}.
\end{tcolorbox}
\caption{The role-playing prompt for training.}
\label{fig: role_playing_prompt}
\end{figure*}

\begin{table*}[htbp] 
\centering 
\small 
\caption{Operational definitions of focus dimensions for character interaction control.} 
\label{tab: focus_dimensions} 
\renewcommand{\arraystretch}{1.4} 
\begin{tabularx}{\textwidth}{ >{\raggedright\arraybackslash}p{2.5cm} >{\raggedright\arraybackslash}p{4.5cm} X } 
\toprule 
\textbf{Dimension} & \textbf{Core Definition} & \textbf{Scope \& Examples} \\ 
\midrule 
\textbf{Knowledge} & Establishes the static identity and informational baseline. & Covers fixed attributes like personality traits, biography, and relationships (e.g., ``brave'', ``historical figure''). \\ 
\textbf{Style} & Governs the linguistic tone and behavioral persona. & Determines speaking mannerisms and distinct character archetypes (e.g., "gentle", "tsundere"). \\ 
\textbf{Worldview} & Sets the environmental context and cognitive limits. & Specifies the time period (e.g., ``18th century'') and restricts knowledge to what is plausible for that era. \\ \textbf{Emotion} & Indicates the immediate internal affective state. & Drives the emotional tone of the response, from basic affects (anger) to complex states (empathy). \\ 
\textbf{Empathetic} & Defines the reactive stance towards the user. & Includes judgmental or attitudinal reactions such as sarcasm, criticism, or helplessness. \\ 
\textbf{Engagement} & Optimizes responses for sustained interaction. & Crafted to invite user participation and prevent conversation stagnation. \\ 
\textbf{Human\_Like} & Improves the authenticity of the dialogue. & Minimizes artificial, robotic patterns to enhance the realism of the persona. \\ 
\textbf{Extension} & Expands the profile with supplementary details. & Generates consistent but previously undefined facts, like specific career events. \\ 
\textbf{Memory} & Handles context retention and recall. & Maintains continuity by recalling past dialogue and personal anecdotes (``I love swimming''). \\ 
\textbf{Safety} & Enforces content moderation boundaries. & Filters out sensitive or prohibited content (e.g., Politics, Illicit Acts) to maintain safety. \\ 
\bottomrule 
\end{tabularx} 
\end{table*}

\subsection{Theoretical Analysis of Entropy-Aware Adaptive Exploitation}
\label{app:theoretical_analysis}

In this section, we provide a formal theoretical justification for the two core components of our Entropy-Aware Adaptive Exploitation mechanism. We demonstrate that these controls do not merely stabilize training empirically, but inherently optimize a statistically sound objective tailored for the heterogeneous nature of role-playing tasks.

\subsubsection{Gradient Noise Suppression via Instance-Level Gating}

\begin{definition}
Given a prompt $x$ and a generated response sequence $y_i$, let $H_{id}(x, y_i)$ denote the Character Identification Entropy, which quantifies the model's epistemic uncertainty regarding the character style attribution. The instance-level gating mechanism modulates the advantage: $\hat{A}_i = A_i \cdot (1 - \gamma H_{id}(x, y_i))$.
\end{definition}

\begin{claim}[Noise Suppression]
Consider an instance $y_i$ with high character identification entropy $H_{id}(x, y_i)$, indicating severe stylistic ambiguity. Under CRPO, the contribution of this high-variance instance to the gradient is strictly suppressed compared to standard GRPO.
\end{claim}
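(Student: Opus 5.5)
The plan is to compare the per-instance policy-gradient contribution under CRPO with the corresponding contribution under standard GRPO. I will show that the ratio is the scalar factor $1-\gamma H_{id}(x,y_i)$, and that this factor lies in $[0,1)$ whenever $H_{id}>0$.

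\emph{Writing the contributions.} Within the clipped surrogate, instance $y_i$ contributes a term of the form
\[
g_i = \nabla_\theta \min\big(\rho_i A_i,\ \text{clip}(\rho_i,1-\varepsilon,1+\varepsilon)A_i\big).
\]
In the on-policy regime ($\rho_i=1$, inside the clipping region) this is simply $g_i = A_i \nabla_\theta \log \pi_\theta(y_i|x)$. Under CRPO, $A_i$ is replaced by $\hat A_i = A_i(1-\gamma H_{id}(x,y_i))$. I will treat $H_{id}$ as computed from $\pi_{old}$-sampled quantities and detached, i.e. a stop-gradient weight. Then $\hat g_i = (1-\gamma H_{id}) g_i$.

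\emph{Bounding the factor.} Since $p_r\in(0,1)$, the binary entropy satisfies $0\le H_{id}\le \log 2$. I will impose the standing assumption $0<\gamma<1/\log 2$, or equivalently clip the weight at zero. This gives $0 < 1-\gamma H_{id} \le 1$, with equality only when $H_{id}=0$, i.e. $p_r\in\{0,1\}$. So for any instance with nonzero identification entropy,
\[
\|\hat g_i\| = (1-\gamma H_{id})\|g_i\| < \|g_i\|,
\]
which is strict suppression. The suppression is also monotone: $1-\gamma H$ is decreasing in $H$, so the higher the ambiguity, the larger the suppression. Because $H_{id}$ is maximized at $p_r=1/2$, the maximally ambiguous instances (policy indifferent between the character response and $y_{anchor}$) are damped the most, by the factor $1-\gamma\log 2$.

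\emph{Clipped regime.} For the off-policy case I argue per region. Clipping only replaces $\rho_i$ by a constant, and the clipping region is determined by the sign of $A_i$. Scaling $A_i$ by a positive factor preserves that sign, so the active branch of the $\min$ is unchanged. Hence the same factor multiplies the gradient in every branch; where the gradient is zero, it remains zero.

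\emph{Variance statement.} To justify ``high-variance'', I will also note the second-moment consequence. Conditional on $x$,
\[
\mathbb{E}\|\hat g_i\|^2 = \mathbb{E}\big[(1-\gamma H_{id})^2\|g_i\|^2\big],
\]
so the contribution of high-entropy instances to gradient variance is reduced by the squared factor.

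The main obstacle is the gap between the stated claim and what the definitions actually give. Three points need care: the requirement $\gamma\log 2<1$, which the paper only states as $\gamma>0$; the stop-gradient treatment of $H_{id}$; and the fact that ``strict'' fails at $H_{id}=0$. I will state these explicitly as hypotheses rather than try to derive them.
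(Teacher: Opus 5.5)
Your proposal is correct and follows the same route as the paper. Both write the instance's contribution as the advantage times the score function $\nabla_\theta\log\pi_\theta(y_i|x)$, note that the gate multiplies it by the scalar $1-\gamma H_{id}(x,y_i)$, and compare (squared) norms. The one substantive difference is that the paper simply posits $1-\gamma H_{id}\le\epsilon\ll 1$ for high-entropy samples to obtain an $\epsilon^2$ bound. Your explicit hypotheses (stop-gradient on $H_{id}$, $0<\gamma<1/\log 2$, the clipped-branch check) together with the bound $H_{id}\le\log 2$ show the damping factor can only reach $1-\gamma\log 2$, so your strict-but-bounded suppression is the version the definitions actually support.
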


\begin{proof}[Proof Sketch]
Let $s_i := \nabla_\theta \log \pi_\theta(y_i|x)$ be the score function for sequence $y_i$. In standard GRPO, the policy gradient estimator for this instance is $\hat{g}_{GRPO}^{(i)} = A_i \cdot s_i$. For an ambiguous sample (e.g., logically correct but stylistically out-of-character, acting as a "nuisance" trajectory), $A_i$ can still be spuriously large due to the intra-group relative normalization of the task reward. This injects significant variance into the gradient, as $\mathbb{E}[\|\hat{g}_{GRPO}^{(i)}\|^2] \gg 0$.

In CRPO, the advantage is reweighted by a scalar $\omega(H_{id}) = 1 - \gamma H_{id}(x, y_i)$. For a highly uncertain sample, $H_{id}(x, y_i)$ is large, meaning our weighting assigns $\omega(H_{id}) \le \epsilon$ for some small $\epsilon \ll 1$. The squared norm of the CRPO gradient estimator becomes:
\begin{equation}
\mathbb{E}\left[\left\|\hat{g}_{CRPO}^{(i)}\right\|^2\right] = \mathbb{E}\left[\left\|\omega(H_{id}) \cdot A_i \cdot s_i \right\|^2\right] \le \epsilon^2 \cdot \mathbb{E}\left[\left\|A_i \cdot s_i\right\|^2\right] = \epsilon^2 \cdot \mathbb{E}\left[\left\|\hat{g}_{GRPO}^{(i)}\right\|^2\right]
\end{equation}
As $\epsilon \to 0$, the gradient noise at this semantically ambiguous instance vanishes. Thus, CRPO acts as an epistemic variance filter, effectively smoothing the gradient jitter caused by out-of-character variance and focusing optimization strictly on high-confidence persona features.
\end{proof}

\subsubsection{Adaptive Trust Region via Model-Level KL Relaxation}

\begin{definition}
Standard RL methods impose a static KL penalty constraint, implicitly assuming a uniform semantic distance from the reference pre-trained distribution $\pi_{ref}$ to the target distribution. In CRPO, we define the Relative Entropy $r_H = H_c / H_{global}$ to quantify the intrinsic difficulty of fitting a specific character $c$.
\end{definition}

\textbf{Interpretation.} In role-playing, characters exhibit significant heterogeneity. For highly complex characters (high $H_c$), the optimal policy $\pi_c^*$ is topologically distant from the generic prior $\pi_{ref}$. A static KL penalty restricts the policy to a narrow trust region around $\pi_{ref}$, making it mathematically impossible to reach the optimal distribution for complex characters (leading to underfitting or style collapse).

\begin{claim}[Optimal Prior Coverage]
By dynamically scaling the target KL divergence bound $d_{targ}^c = d_{targ}^{base} \cdot \text{clamp}(r_H, \delta_{min}^{KL}, \delta_{max}^{KL})$, CRPO mathematically establishes an adaptive trust region. This theoretically guarantees that the feasible set of the optimization objective expands sufficiently to encompass the true character distribution.
\end{claim}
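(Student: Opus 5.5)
The plan is to turn the claim into a statement about constrained policy optimization, and to make explicit the single modeling assumption it needs. For each character $c$, consider the trust-region problem
\[
\max_{\pi}\ \mathbb{E}_{x\sim\mathcal{D}_c,\,y\sim\pi}[R(x,y)] \quad \text{s.t.} \quad \mathbb{D}_{KL}(\pi\,\|\,\pi_{ref}) \le d,
\]
with feasible set $\mathcal{F}(d)=\{\pi:\mathbb{D}_{KL}(\pi\|\pi_{ref})\le d\}$. Two facts are immediate. First, $\mathcal{F}(d)$ is convex and nested: $d\le d'$ implies $\mathcal{F}(d)\subseteq\mathcal{F}(d')$. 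Second, the penalized objective $\mathbb{E}[R]-\beta\,\mathbb{D}_{KL}$ is the Lagrangian of this constrained problem. By convexity of $\mathbb{D}_{KL}$ in $\pi$ and linearity of the reward term, strong duality holds under a Slater point ($\pi_{ref}$ itself, whenever $d>0$). So for every $d>0$ there is a multiplier $\beta^\star(d)\ge 0$ whose penalized maximizer, the Gibbs policy $\pi_\beta\propto\pi_{ref}\exp(R/\beta)$, solves the constrained problem and satisfies $\mathbb{D}_{KL}(\pi_\beta\|\pi_{ref})=d$ when the constraint is active.

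Next I will connect the controller in the Model Level paragraph to this dual picture.
\begin{itemize}
\item \textbf{Monotonicity.} Show that $\beta\mapsto\mathbb{D}_{KL}(\pi_\beta\|\pi_{ref})$ is continuous and strictly decreasing. Its derivative in $1/\beta$ is a variance of $R$ under $\pi_\beta$, hence nonnegative, and positive unless $R$ is constant.
\item \textbf{Fixed point.} The update $\beta_{t+1}^c=\beta_t^c(1+K_p e_t^c)$ is then a multiplicative search for the root of $\mathbb{D}_{KL}(\pi_\beta\|\pi_{ref})/d^c_{targ}-1$. Because $e_t^c$ is clipped to $[-\delta_{bound},\delta_{bound}]$ with $K_p\delta_{bound}<1$, $\beta$ stays positive. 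Its unique fixed point is $\beta^\star(d^c_{targ})$.
\item \textbf{Convergence.} I would establish convergence in a quasi-static (two-timescale) regime, where the policy tracks $\pi_\beta$ faster than $\beta$ moves. A Lyapunov argument on $\log\beta$ then gives convergence, since the update is a sign-consistent step toward the root of a monotone function.
\end{itemize}
The upshot is that CRPO effectively solves the constrained problem on $\mathcal{F}(d^c_{targ})$ rather than on a character-agnostic $\mathcal{F}(d^{base}_{targ})$.

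For coverage, I need the assumption that makes the claim meaningful. The target distribution $\pi_c^*$ must lie at a KL distance from $\pi_{ref}$ that scales with the character's relative entropy, namely $\mathbb{D}_{KL}(\pi_c^*\|\pi_{ref})\le d^{base}_{targ}\, r_H$ with $r_H\in[\delta^{KL}_{min},\delta^{KL}_{max}]$.
\begin{itemize}
\item I would motivate this by writing the KL as a cross-entropy minus an entropy, $\mathbb{D}_{KL}(\pi_c^*\|\pi_{ref})=\mathbb{E}_{\pi_c^*}[-\log\pi_{ref}]-H(\pi_c^*)$.
\item The first term is what $H_c$ measures when $\pi_{ref}$ is evaluated on character $c$'s data. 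So up to a shared constant fixed by the global calibration $H_{global}\leftrightarrow d^{base}_{targ}$, the distance grows with $H_c$.
\end{itemize}
Given the assumption, $\pi_c^*\in\mathcal{F}(d^c_{targ})$. The constrained optimum therefore attains the value of $\pi_c^*$, which is the claimed coverage. For high-entropy characters, $r_H>1$ gives $\mathcal{F}(d^{base}_{targ})\subsetneq\mathcal{F}(d^c_{targ})$, which is precisely the case where a static bound can exclude $\pi_c^*$.

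The main obstacle is this bridging assumption between the reference entropy $H_c$ and the distance $\mathbb{D}_{KL}(\pi_c^*\|\pi_{ref})$. It cannot be derived from the paper's definitions and has to be postulated, or at best justified via the cross-entropy decomposition above. I would therefore state it explicitly as a hypothesis and present the result as conditional. The clamp limits coverage to characters with $r_H\le\delta^{KL}_{max}$, so the guarantee holds only in that range, and the claim should be restricted accordingly.
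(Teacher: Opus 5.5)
Your proposal is correct as a conditional result, and its core matches the paper's. The paper also casts the objective as a trust-region problem, uses the nesting $\mathcal{P}_{d^{base}_{targ}}\subset\mathcal{P}_{d^{c}_{targ}}$ for $r_H>1$, and concludes $\pi_c^*\in\mathcal{P}_{d^{c}_{targ}}$. The difference is in what is proved and what is asserted. The paper's sketch simply declares that scaling the radius ``proportionally with the intrinsic difficulty $r_H$'' guarantees membership. For the failure of the static bound, it likewise declares that $H_c\gg H_{global}$ forces $\mathbb{D}_{KL}(\pi_c^*\|\pi_{ref})>d^{base}_{targ}$. Both steps are exactly your bridging hypothesis, used without being stated. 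The paper also tacitly identifies the KL-penalized objective with the KL-constrained one.

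Your route gains two things. First, the Lagrangian/controller layer explains why a soft penalty, whose coefficient $\beta^c_t$ is driven by the PI update, actually realizes the constraint at radius $d^c_{targ}$. Second, stating the hypothesis explicitly turns the paper's unconditional ``guarantee'' into an honest conditional statement. It also exposes a clamp caveat the paper misses: no coverage can be claimed once $r_H>\delta^{KL}_{max}$. The price is the idealizations you already flag (exact Gibbs optimum, quasi-static tracking of $\pi_\beta$). Your cross-entropy motivation only makes the distance affine in $H_c$, with a character-dependent offset $-H(\pi_c^*)$. So it supports monotonicity but not the specific proportional form $d^{base}_{targ}\,r_H$, and keeping that form as a postulate is the right call.

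A few small repairs are needed.
\begin{itemize}
\item With $\lambda=1/\beta$, one has $\frac{d}{d\lambda}\mathbb{D}_{KL}(\pi_\lambda\|\pi_{ref})=\lambda\,\mathrm{Var}_{\pi_\lambda}(R)$, not the variance itself. The sign conclusion is unchanged.
\item Sign-consistency alone does not give convergence of $\log\beta_{t+1}=\log\beta_t+\log(1+K_p e^c_t)$, because a fixed gain can overshoot and oscillate around the root. You need a small-gain condition, e.g.\ locally $K_p\,|\partial_{\log\beta}\mathbb{D}_{KL}(\pi_\beta\|\pi_{ref})|<2\,d^c_{targ}$ at $\beta^\star$.
\item The hypothesis need not require $r_H\ge\delta^{KL}_{min}$. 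For $r_H<\delta^{KL}_{min}$ the clamp only enlarges the radius, so coverage holds for every $r_H\le\delta^{KL}_{max}$, consistent with your closing remark.
\end{itemize}
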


\begin{proof}[Proof Sketch]
The optimization objective of PPO/GRPO is fundamentally a trust region problem, which maximizes expected reward subject to $\mathbb{D}_{KL}(\pi_\theta \| \pi_{ref}) \le \delta$. Let $\mathcal{P}_\delta = \{ \pi \mid \mathbb{D}_{KL}(\pi \| \pi_{ref}) \le \delta \}$ be the feasible policy set defined by the KL constraint. In standard GRPO, $\delta = d_{targ}^{base}$ is a static constant.

If character $c$ has an intrinsic informational complexity $H_c \gg H_{global}$, the optimal character distribution $\pi^*_c$ diverges significantly from $\pi_{ref}$, meaning $\mathbb{D}_{KL}(\pi^*_c \| \pi_{ref}) > d_{targ}^{base}$. Under a static $d_{targ}^{base}$, the optimal policy $\pi_c^* \notin \mathcal{P}_{d_{targ}^{base}}$. The optimization is structurally over-constrained.

CRPO redefines the boundary as $d_{targ}^c$. For high-entropy characters ($r_H > 1$), the expanded feasible set $\mathcal{P}_{d_{targ}^c}$ strictly dominates the static set:
\begin{equation}
\mathcal{P}_{d_{targ}^{base}} \subset \mathcal{P}_{d_{targ}^c}
\end{equation}
By scaling the trust region radius proportionally with the intrinsic difficulty $r_H$, CRPO guarantees that $\pi_c^* \in \mathcal{P}_{d_{targ}^c}$. This proves that the model-level relaxation solves a Constrained Markov Decision Process (CMDP) with an adaptive prior, ensuring we optimize the correct theoretical objective rather than merely stabilizing training empirically.
\end{proof}

\section{Experimental Setup Details}\label{app: bench} In this section, we elaborate on the two benchmarks utilized to gauge the role-playing and social interaction capabilities of the proposed method.

\begin{table*}[htb] 
\centering 
\small 
\caption{Metric categories defined in SocialBench.} 
\label{tab: socialbench_categories} 
\begin{tabularx}{\textwidth} {>{\raggedright\arraybackslash}p{6cm} X} \toprule \textbf{Category} & \textbf{Description} \\ 
\midrule Role Style (Sty.) & Evaluates the consistency of the agent's behavioral adherence to the assigned persona throughout the dialogue. \\
Role Knowledge (Konw.) & Assesses the accuracy of the agent's responses regarding the character's background and domain-specific facts. \\
Situational Understanding (SU) & Tests the agent's capacity to interpret the speaker's psychological state across various scenarios. \\ Emotion Detection (ED) & Focuses on the identification of emotional cues and sentiments expressed by others in the conversation. \\ Humor Sarcasm Detect (HSD) & Measures the ability to detect nuanced linguistic features like irony, sarcasm, and humor. \\ Long-Term Conversation Memory (MEM) & Verifies the agent's capability to store and retrieve information from extended, multi-turn interactions. \\ Social Preference (Neu., Pos., Neg.) & Analyzes the agent's behavioral tendencies in group settings, specifically regarding cooperation and conflict. \\ \bottomrule 
\end{tabularx} 
\end{table*}

\subsection{SocialBench Details} 
SocialBench serves as a specialized evaluation platform focusing on the social agility of conversational agents. Unlike traditional benchmarks that may overlook complex social dynamics, SocialBench integrates both multiple-choice and open-domain formats to test agents in individual and group contexts. Table~\ref{tab: socialbench_categories} outlines the specific metrics.
\begin{itemize} \item \textbf{Data Composition:} The benchmark aggregates resources from diverse media (books, movies), compiling profiles for 500 unique characters. It features a rich corpus of over 6,000 queries and 30,800 multi-turn dialogue utterances. \item \textbf{Evaluation Levels:} \begin{itemize} \item \textit{Individual Level:} Tests the agent's self-cognition, ability to perceive emotions, and retention of dialogue history. \item \textit{Group Level:} Examines social behaviors, such as conflict mediation and collaboration tendencies. 
\end{itemize} 
\end{itemize} 
This dual-level approach ensures that agents are evaluated not just on solitary responses but on their adaptability to dynamic social environments. Table~\ref{tab: socialbench_categories} outlines the specific metrics.

\begin{table*}[htb]
\centering
\small
\caption{Detailed metrics used in the CharacterBench evaluation.}
\label{tab: characterbench_categories}
\begin{tabularx}{\textwidth}{>{\raggedright\arraybackslash}p{5cm} X}
\toprule\textbf{Category}    & \textbf{Description} \\ 
\midrule 
Memory Consistency ($MC$) & Evaluates the agent's ability to retain context and accurately remember specific details across extended dialogue turns. \\ Fact Accuracy ($FA$) & Verifies the truthfulness and reliability of the generated content to ensure no false information is provided. \\ Boundary Consistency ($BC_K$) & Monitors whether the character's actions and responses stay strictly within the defined behavioral and knowledge limits. \\ Attribute Consistency Bot ($AC^b$) & Ensures that the bot's inherent traits and static attributes match the assigned character profile description. \\ Attribute Consistency Human ($AC^h$) & Checks if the representation of the human persona aligns correctly with the bot's core personality understanding. \\ Behavior Consistency Bot ($BC_P^b$) & Measures how well the bot's ongoing actions and decisions reflect its established identity. \\ Behavior Consistency Human ($BC_P^h$) & Reviews if the simulated human character's conduct remains faithful to their defined personality traits. \\ Emotion Self-Regulation ($ES$) & Tests the character's proficiency in controlling and adjusting its internal emotional states. \\ Empathetic Responsiveness ($ER$) & Gauges the capacity to detect the user's emotions and respond with appropriate empathy and attunement. \\ Morality Stability ($MS$) & Assesses whether the character's moral judgments remain constant across different situations and dilemmas. \\ Morality Robustness ($MR$) & Tests the durability and strength of the character's ethical principles when facing complex environments. \\ Human Likeness ($HL$) & Scores the naturalness of the interaction to determine how closely it mimics a real human. \\ Engagement ($EG$) & Measures the effectiveness of the response in inviting further interaction and keeping the user interested. \\ 
\bottomrule
\end{tabularx}
\end{table*}

\subsection{CharacterBench Details} 
CharacterBench is utilized as a large-scale, bilingual testbed for assessing character customization. With 22,859 human-verified samples, it provides a granular analysis of how well LLMs can inhabit specific personas. Table~\ref{tab: characterbench_categories} outlines the specific metrics.
\begin{itemize} 
\item \textbf{Scale and Diversity:} The dataset covers 3,956 characters distributed across 4 major and 25 minor categories. \item \textbf{Theoretical Framework:} Drawing from interpersonal interaction theories, the benchmark defines 11 dimensions. These are categorized into \textit{dense dimensions} (e.g., morality, believability), which are omnipresent in interactions, and \textit{sparse dimensions} (e.g., specific knowledge), which are context-dependent. \item \textbf{Query Design:} To accurately measure these traits, the benchmark employs target-oriented prompts for sparse dimensions and target-free prompts for dense ones, ensuring a natural yet rigorous assessment. 
\end{itemize}

\begin{table*}[tbh]
\centering
\scriptsize
\caption{Summaries of common RL methods for RLHF and policy optimization.}
\label{tab:rl_methods_summary_app}
\begin{tabularx}{.9\textwidth}{l X} 
\toprule
\textbf{Method} & \multicolumn{1}{c}{\textbf{Summary of Key Contribution}} \\ 
\midrule
PPO ~\cite{schulman2017ppo}& A stable policy gradient optimizer that clips probability ratios to constrain updates and ensure reliable learning. \\
REINFORCE++ ~\cite{hu2501reinforce++}& A REINFORCE variant with PPO-inspired enhancements for improved stability and efficiency without a critic. \\
RLOO ~\cite{kool2019buy}& A REINFORCE-style method computing advantages by comparing samples to batch peers to reduce variance. \\
ReMax ~\cite{li2024remax}& A simplified REINFORCE optimizer using a greedy baseline to avoid value networks and lower compute cost. \\
GRPO ~\cite{shao2024deepseekmath}& A critic-free optimizer that estimates advantages from within-group reward differences to reduce training cost. \\
Dr.GRPO ~\cite{liu2025understanding}& A GRPO variant that removes unnecessary normalization to reduce bias and stabilize optimization. \\
DAPO ~\cite{yu2025dapo}& A GRPO extension with dynamic sampling and asymmetric clipping to improve sample efficiency and stability. \\
GSPO ~\cite{zheng2025group}& A sequence-level optimizer shifting training granularity from tokens to full sequences for stable alignment. \\
GDPO ~\cite{liu2026gdpo}& A method using generative flows to encourage diverse outputs while maintaining preference alignment. \\
OAR ~\cite{li2026outcome}& A fine-grained credit assignment that reshapes token-level advantages based on outcome influence. \\
\bottomrule
\end{tabularx}
\end{table*}

\subsection{Implementation Details}
\label{app: imple_detail}
We implement \M using the EasyR1~\citep{zheng2025easyr1} framework. We use Llama-3.2-3B-Instruct~\citep{meta2024llama3} and Qwen3-8B~\citep{yang2025qwen3} as backbone models. Detailed configuration and full parameter lists are provided in Table~\ref{tab:hyperparameters}.

\begin{table}[ht]
\centering
\scriptsize
\caption{Hyperparameter settings for the RLHF and policy optimization experiments.}
\label{tab:hyperparameters}
\setlength{\tabcolsep}{4pt}
\begin{tabular}{lc|lc}
\toprule
\textbf{Parameter} & \textbf{Value} & \textbf{Parameter} & \textbf{Value} \\
\midrule
\multicolumn{2}{l}{\textit{Data \& Sampling}} & \multicolumn{2}{l}{\textit{Optimization (Actor)}} \\
Max Prompt Len & 8096 & Optimizer & AdamW (BF16) \\
Max Response Len & 1024 & Learning Rate & 2.0e-6 \\
Rollout Batch Size & 512 & Weight Decay & 0.01 \\
Sample Group Size & 7 & LR Warmup Ratio & 0.05 \\
Temperature (Train/Val) & 1.0 / 0.5 & Max Grad Norm & 1.0 \\
\midrule
\multicolumn{2}{l}{\textit{Algorithm}} & \multicolumn{2}{l}{\textit{Global Training}} \\
Advantage Estimator & CRPO / Others & Total Epochs & 30 \\
KL Coefficient ($\beta$) & 1.0e-2 & Global Batch Size & 128 \\
KL Penalty Type & Low-Var & Micro Batch Size & 4 \\
KL Target / Horizon & 0.1 / 5000 & Seed & 1 \\
Task/style Balance & 0.55 & Entropy Suppress & 0.02 \\
Advantage Reshaping Threshold & 0.4 & Boosting Coefficient & 2.0 \\
\bottomrule
\end{tabular}
\end{table}

\section{Experimental Result Details}

\subsection{Performance on Comprehensive Role-Play Dialogue Generation}
\label{app: detailed_main}
As shown in Table~\ref{tab: main_characterbench}, \M consistently outperforms all baselines across both Llama-3.2-3B and Qwen3-8B backbones, establishing a new state-of-the-art on CharacterBench.

\textbf{(1) Overall Analysis.} 
Broadly speaking, compared to supervised fine-tuning (SFT) baselines such as Neeko and RAR, RL methods represented by \M demonstrate remarkable \textbf{data efficiency}, eliciting superior role-playing capabilities with only a fraction of the training data. notably, \M and certain advanced RL baselines even surpass proprietary large foundation models in specific character-centric metrics. This validates the efficiency of our character-centric optimization objective and highlights the immense potential of RL in vertical domains. However, \textbf{general RL methods} often struggle to balance the trade-off between logical reasoning and stylistic fidelity, frequently degenerating into generic responses due to the lack of role-specific constraints.

\textbf{(2) Persona Consistency \& Believability.} \M demonstrates a substantial lead in \textit{Persona} ($AC$, $BC_P$) and \textit{Believability} ($HL$, $EG$) metrics compared to general RL methods (e.g., GRPO, PPO). Standard GRPO, while effective at logical optimization, often suffers from the \textit{alignment tax}—sacrificing stylistic diversity for response correctness, leading to generic, assistant-like responses. In contrast, our \textbf{Dual-Stream Advantage} successfully decouples style from logic, allowing \M to maintain high character fidelity without being penalized for the inherent entropy of role-playing.

\textbf{(3) Knowledge \& Hallucination Mitigation.} In terms of \textit{Knowledge} ($FA$, $BC_K$), \M achieves the highest scores, significantly reducing hallucinations compared to SFT and Neeko. This improvement is attributed to the integration of verifiable cognitive rewards (adopted from Character-R1) within our framework. However, unlike standard RL methods that may over-optimize on rewards and drift into generating plausible but incorrect facts, our \textbf{Entropy-Aware Adaptive Exploitation} dynamically adjusts the KL constraint based on role difficulty. This prevents the model from forcibly fitting high-entropy (unknown) knowledge, thereby maintaining a strictly defined knowledge boundary.

\textbf{(4) Robustness across Backbones.} The consistent improvement over strong baselines like REINFORCE++ and OAR across different model sizes validates the universality of \M. It is worth noting that while generic LLMs (e.g., GPT-4o) score high on \textit{Morality}, they often lack the \textit{Emotional Self-regulation} ($ES$) required for nuanced role-play. \M balances safety with engagement, achieving the best trade-off between \textit{Morality Robustness} and \textit{Empathetic Responsiveness}.

\begin{table*}[t!]
\centering
\Large
\caption{Performance comparison of different methods on the CharacterBench. The best value for each metric is in \textbf{bold}, and the second-best value is \underline{underlined}.}
\label{tab: main_characterbench_app}
\resizebox{.99\textwidth}{!}{
\begin{tabular}{lcccccccccccccc}
\toprule
\multicolumn{15}{l}{$MC$: Memory Consistency\quad $FA$: Fact Accuracy\quad $BC_K$: Boundary Consistency\quad $AC^b$: Attribute Consistency(Bot)}
\\
\multicolumn{15}{l}{$AC^h$: Attribute Consistency(Human)\quad $BC^b_P$: Behavior Consistency(Bot)\quad $BC^h_P$: Behavior Consistency(Human)\quad $HL$:Human-likeness} \\

\multicolumn{15}{l}{$ES$: Emotional Self-regulation\quad $ER$: Empathetic Responsiveness\quad $MS$: Morality Stability $MR$: Morality Robustness\quad $EG$:Engagement} \\
\midrule
  \multirow{2}{*}{\textbf{Method}} & \multicolumn{1}{c}{\textbf{Memory}} & \multicolumn{2}{c}{\textbf{Knowledge}} & \multicolumn{4}{c}{\textbf{Persona}} & \multicolumn{2}{c}{\textbf{Emotion}} & \multicolumn{2}{c}{\textbf{Morality}} & \multicolumn{2}{c}{\textbf{ Believability}} & \multirow{2}{*}{\textbf{Avg.}}\\
\cmidrule(lr){2-2}\cmidrule(lr){3-4}\cmidrule(lr){5-8}\cmidrule(lr){9-10}\cmidrule(lr){11-12}\cmidrule(lr){13-14}
  & $MC$ & $FA$ & $BC_K$ & $AC^b$ & $AC^h$ & $BC^b_P$ & $BC^h_P$ & $ES$ & $ER$ & $MS$ & $MR$ & $HL$ & $EG$ & \\
\midrule
        Qwen3-8B & 3.594  & 2.400  & 3.758  & 4.556  & 4.188  & 3.819  & 3.692  & 3.094  & 2.808  & 4.800  & 4.684  & 3.445  & 3.270  & 3.701   \\ 
        \quad\textit{+}SFT & 4.025  & 2.219  & 3.925  & 4.731  & 4.331  & 3.650  & 3.208  & 3.250  & 2.795  & 4.800  & 4.710  & 3.030  & 3.075  & 3.673   \\ 
        \quad\textit{+}Neeko & 3.991  & 2.202  & 3.830  & 4.709  & 4.132  & 3.725  & 3.439  & 3.280  & 2.830  & 4.853  & 4.767  & 3.112  & 3.182  & 3.696   \\ 
        \quad\textit{+}RAR & 3.892  & 2.537  & 4.129  & 4.485  & 4.320  & \underline{4.342}  & 3.646  & 3.135  & 2.781  & 4.873  & 4.693  & 3.150  & 3.144  & 3.779   \\ 
        \quad\textit{+}Character-R1 & 4.294  & 2.488  & 4.175  & 4.763  & 4.563  & 4.125  & 3.442  & 3.494  & 3.191  & 4.863  & 4.785  & 3.145  & 3.090  & 3.878   \\ 
        \quad\textit{+}PPO & 4.306  & 2.244  & 4.200  & 4.844  & 4.569  & 4.169  & 3.583  & 3.575  & 3.091  & 4.850  & 4.722  & 3.335  & 3.225  & \underline{3.901}   \\ 
        \quad\textit{+}REINFORCE++ & 4.100  & 2.419  & 4.092  & 4.700  & 4.344  & 3.806  & 3.217  & 3.213  & 2.783  & 4.900  & 4.899  & 3.500  & 3.211  & 3.783   \\ 
        \quad\textit{+}RLOO & 3.556  & 2.425  & 3.792  & 4.519  & 4.263  & 3.931  & 3.717  & 3.119  & 2.852  & 4.788  & 4.773  & 3.470  & 3.295  & 3.731   \\ 
        \quad\textit{+}ReMax & 3.781  & 2.425  & 3.792  & 4.450  & 4.206  & 3.638  & 3.583  & 2.963  & 2.783  & 4.875  & 4.760  & 3.475  & 3.275  & 3.693   \\ 
        \quad\textit{+}GRPO & 3.913  & 2.456  & 4.125  & 4.194  & 4.413  & 4.094  & 3.617  & 3.444  & 3.097  & 4.863  & 4.861  & 3.405  & 3.295  & 3.829   \\ 
        \quad\textit{+}Dr.GRPO & 3.869  & 2.574  & 4.092  & 3.681  & 4.406  & 4.131  & 3.600  & 3.625  & 3.153  & 4.825  & 4.760  & 3.050  & 3.100  & 3.759   \\ 
        \quad\textit{+}DAPO & 4.125  & 2.444  & 4.100  & 4.250  & 4.413  & 4.031  & 3.208  & 3.488  & 3.172  & 4.900  & 4.773  & 3.270  & 3.150  & 3.794   \\ 
        \quad\textit{+}GSPO & 4.344  & 2.481  & 4.125  & 4.575  & 4.550  & 4.169  & 3.608  & \underline{3.684}  & 3.172  & 4.750  & 4.659  & 3.255  & 3.255  & 3.894   \\ 
        \quad\textit{+}GDPO & 4.425  & 2.400  & 4.000  & 4.775  & 4.550  & 4.075  & 3.367  & 3.594  & 3.147  & 4.875  & 4.785  & 3.100  & 3.090  & 3.860   \\ 
        \quad\textit{+}OAR & \underline{4.450}  & 2.369  & 4.092  & 4.944  & 4.588  & 4.125  & 3.333  & 3.475  & \underline{3.210}  & 4.913  & 4.849  & 3.040  & 3.075  & 3.882   \\ 
        \quad\textit{+}\textbf{\M} & \textbf{4.525}  & 2.581  & \underline{4.308}  & \textbf{4.994}  & \textbf{4.781}  & \textbf{4.469}  & 3.717  & \textbf{3.819}  & \textbf{3.354}  & 4.925  & 4.659  & 3.300  & 3.130  & \textbf{4.043}   \\ 

        \midrule 

        Llama-3.2-3B-Instruct & 3.588  & 2.088  & 3.892  & 4.606  & 4.300  & 4.025  & 3.300  & 3.306  & 3.053  & 4.840  & 4.649  & 2.890  & 3.320  & 3.681   \\ 
        \quad\textit{+}SFT & 3.606  & 2.063  & 3.892  & 4.550  & 4.169  & 3.894  & 3.500  & 3.219  & 3.003  & 4.888  & 4.823  & 3.350  & 3.310  & 3.713   \\ 
        \quad\textit{+}Neeko & 3.796  & 1.855  & 3.816  & 4.513  & 3.428  & 3.416  & 3.192  & 3.319  & 2.747  & 4.561  & 4.582  & 2.665  & 2.972  & 3.451   \\ 
        \quad\textit{+}RAR & 3.848  & 2.010  & 3.978  & 4.511  & 4.373  & 3.918  & \textbf{4.112}  & 3.240  & 2.910  & 4.785  & 4.743  & 2.718  & 2.964  & 3.701   \\ 
        \quad\textit{+}Character-R1 & 4.275  & 2.144  & 3.983  & 4.950  & 4.556  & 4.038  & 3.350  & 3.463  & 3.134  & 4.850  & 4.748  & 2.940  & 2.970  & 3.800   \\ 
        \quad\textit{+}PPO & 3.888  & \underline{2.750}  & 4.108  & 4.506  & 4.306  & 3.231  & 3.292  & 2.594  & 2.619  & 4.800  & \textbf{5.000}  & 2.220  & 2.405  & 3.517   \\ 
        \quad\textit{+}REINFORCE++ & 2.794  & 2.231  & 3.392  & 3.625  & 2.825  & 2.588  & 3.158  & 2.519  & 2.481  & 4.838  & 4.924  & 1.980  & 2.230  & 3.045   \\ 
        \quad\textit{+}RLOO & 3.694  & 2.256  & 3.942  & 4.519  & 4.206  & 3.656  & 3.342  & 3.306  & 2.952  & 4.800  & 4.899  & 2.875  & 2.960  & 3.647   \\ 
        \quad\textit{+}ReMax & 4.063  & 2.363  & 4.058  & 4.531  & 4.338  & 3.281  & 3.358  & 2.594  & 2.575  & 4.868  & \textbf{5.000}  & 2.035  & 2.200  & 3.482   \\ 
        \quad\textit{+}GRPO & 4.350  & 2.238  & 4.067  & 4.813  & 4.431  & 3.925  & 3.217  & 3.419  & 3.115  & 4.868  & 4.861  & 2.925  & 2.925  & 3.781   \\ 
        \quad\textit{+}Dr.GRPO & 4.263  & 2.194  & 4.100  & 4.850  & 4.456  & 3.931  & 3.283  & 3.513  & 3.078  & 4.850  & 4.710  & 2.925  & 2.945  & 3.777   \\ 
        \quad\textit{+}DAPO & 4.250  & 2.163  & 4.112  & 4.756  & 4.375  & 3.850  & 3.308  & 3.400  & 3.034  & 4.800  & 4.786  & 2.915  & 2.940  & 3.745   \\ 
        \quad\textit{+}GSPO & 4.238  & 2.156  & 4.150  & 4.838  & 4.450  & 3.806  & 3.208  & 3.444  & 3.041  & 4.825  & 4.811  & 2.975  & 2.940  & 3.760   \\ 
        \quad\textit{+}GDPO & 4.319  & 2.263  & 3.933  & 4.850  & 4.300  & 3.800  & 3.233  & 3.475  & 3.134  & 4.868  & 4.886  & 2.850  & 2.855  & 3.751   \\ 
        \quad\textit{+}OAR & 4.206  & 2.175  & 4.175  & 4.844  & 4.369  & 3.844  & 3.167  & 3.581  & 3.097  & 4.868  & 4.736  & 2.875  & 2.970  & 3.762   \\ 
        \quad\textit{+}\textbf{\M} & 4.356  & 2.113  & 4.175  & \underline{4.988}  & \underline{4.619}  & 4.038  & 3.483  & 3.581  & 3.166  & 4.888  & 4.873  & 2.750  & 2.785  & 3.832   \\ 

        \midrule  
        CharacterGLM-6B & 3.245  & 2.100  & 3.543  & 3.365  & 3.410  & 3.070  & 3.100  & 2.610  & 2.500  & 4.480  & 4.800  & 2.840  & 2.700  & 3.213  \\ 
        Peach-9B-8k-Roleplay & 2.931  & 2.344  & 3.717  & 3.800  & 3.388  & 3.300  & 3.292  & 2.863  & 2.802  & 4.838  & 4.798  & 2.475  & 2.435  & 3.306  \\ 
        Llama-3.1-8B-RoleMRC & 4.169  & 2.163  & 3.717  & 4.588  & 4.231  & 3.606  & 3.225  & 3.125  & 2.864  & 4.800  & 4.748  & 3.040  & 3.060  & 3.641  \\ 
        Haruhi-Zero-7B & 4.088  & 2.150  & 3.583  & 4.525  & 4.256  & 3.613  & 3.192  & 3.456  & 2.965  & 4.850  & 4.773  & 2.990  & 3.065  & 3.654  \\ 
        Crab & 4.131  & 2.175  & 3.600  & 4.650  & 4.250  & 3.681  & 3.167  & 3.331  & 2.965  & 4.850  & 4.684  & 3.050  & 3.110  & 3.665  \\ 
        CoSER-Llama-3.1-8B & 4.056  & 2.113  & 3.958  & 4.506  & 4.213  & 3.700  & 3.250  & 3.156  & 2.883  & \textbf{4.950}  & {4.874}  & 3.045  & 2.970  & 3.667  \\ 
        Peach-2.0-9B-8k-Roleplay & 2.819  & 2.400  & 3.425  & 4.400  & 4.263  & 4.088  & \underline{3.867}  & 3.350  & 3.028  & 4.475  & 4.470  & {3.485}  & \underline{3.680}  & 3.673  \\ 
        Qwen2.5-7B-RoleMRC & 4.175  & 2.231  & 3.775  & {4.713}  & 4.300  & 3.688  & 3.225  & 3.188  & 2.927  & 4.750  & 4.735  & 3.000  & 3.105  & 3.678  \\ 
        Humanish-Roleplay & 4.013  & 2.350  & {4.000}  & 4.694  & {4.481}  & 3.975  & 3.267  & 3.444  & 3.053  & 4.750  & 4.823  & 3.010  & {3.145}  & 3.770  \\ 
        \midrule 
        CharacterGLM & 3.760  & 2.180  & 3.970  & 4.030  & 3.800  & 3.260  & 2.890  & 2.940  & 2.640  & 4.530  & 4.510  & 3.160  & 3.320  & 3.461  \\ 
        Llama-3-70B-Instruct & 3.940  & 2.590  & 3.950  & 4.390  & 3.960  & 3.330  & 3.350  & 3.060  & 2.890  & 4.710  & 4.740  & 3.400  & 3.510  & 3.678  \\ 
        GLM-4 & 3.524  & 2.373  & 3.701  & 4.380  & 4.103  & 3.728  & 3.487  & 3.130  & 2.987  & 4.826  & 4.876  & 3.208  & 3.500  & 3.679  \\ 
        Baichuan-NPC & 3.672  & 2.134  & 4.132  & 4.254  & 4.216  & 4.022  & 3.366  & 3.001  & 3.179  & 4.830  & 4.897  & 2.975  & 3.297  & 3.690  \\ 
        GPT-3.5-turbo & 3.490  & 2.451  & 3.692  & 4.345  & 4.155  & 3.635  & 3.560  & 3.090  & 2.838  & 4.735  & 4.761  & \textbf{3.619}  & \textbf{3.758}  & 3.702  \\ 
        GPT-4o & 3.793  & 2.647  & 3.978  & 4.484  & 4.027  & 3.723  & 3.414  & 3.046  & 2.974  & 4.763  & 4.771  & 3.261  & 3.510  & 3.722  \\ 
        Qwen2.5-72B-Instruct & 4.060  & 2.558  & 4.102  & 4.531  & 3.222  & 3.917  & 3.439  & 3.398  & 2.962  & 4.897  & 4.815  & 3.512  & 3.418  & 3.756  \\ 
        Gemini-3-pro & 3.946  & \textbf{2.905}  & 4.099  & 4.167  & 4.133  & 3.671  & 3.529  & 3.319  & 2.973  & 4.820  & 4.633  & \underline{3.613}  & 3.423  & 3.787  \\ 
        Claude-4-opus & 3.997  & 2.435  & \textbf{4.398}  & 4.508  & 4.362  & 3.877  & 3.751  & 3.587  & 3.176  & \underline{4.936}  & 4.735  & 3.151  & 3.351  & 3.866 \\

\bottomrule
\end{tabular}}
\end{table*}

\begin{table*}[htbp]
\centering
\scriptsize
\caption{Performance comparison of different methods on the SocialBench. SocialBench primarily evaluates models' comprehension capabilities for role-play dialogue. While large foundation models can achieve nearly full scores by virtue of their large parameter sizes, such scores do not stem from their intrinsic role-playing abilities and are therefore excluded from the evaluation in this table.}
\label{tab: main_socialbench_app}
\resizebox{.99\textwidth}{!}{
\begin{tabular}{lcccccccccc}
\toprule

\multicolumn{11}{l}{Sty.: Role Style\quad Konw.: Role Knowledge\quad SU: Situational Understanding\quad ED: Emotion Detection\quad HSD: Humor Sarcasm Detect}
\\
\multicolumn{11}{l}{MEM: Long-Term Conversation Memory\quad Neu., Pos., Neg.: Social Preference}
\\
\midrule

   \textbf{Method}  & \textbf{Know.} & \textbf{Sty.} & \textbf{ED} & \textbf{SU} & \textbf{HSD} & \textbf{MEM}  & \textbf{Neu.} & \textbf{Pos.} & \textbf{Neg.} & \textbf{Avg.} \\
\midrule

        Qwen3-8B & 0.938  & 0.888  & 0.457  & 0.230  & 0.740  & 0.918  & 0.926  & 0.955  & \underline{0.870}  & 0.769   \\ 
        \quad\textit{+}SFT & 0.913  & 0.814  & 0.487  & 0.230  & 0.810  & 0.855  & 0.881  & 0.896  & 0.788  & 0.741   \\ 
        \quad\textit{+}Neeko & 0.924  & 0.804  & 0.435  & 0.267  & 0.688  & 0.785  & 0.866  & 0.925  & 0.775  & 0.719   \\ 
        \quad\textit{+}RAR & 0.911  & 0.832  & 0.485  & 0.250  & 0.807  & 0.929  & 0.889  & 0.915  & 0.817  & 0.760   \\ 
        \quad\textit{+}Character-R1 & 0.949  & 0.884  & 0.449  & 0.280  & 0.760  & 0.941  & 0.919  & 0.942  & 0.854  & 0.775   \\ 
        \quad\textit{+}PPO & 0.942  & 0.876  & 0.489  & 0.280  & 0.860  & 0.915  & 0.913  & 0.938  & 0.826  & 0.782   \\ 
        \quad\textit{+}REINFORCE++ & 0.926  & 0.837  & 0.479  & 0.230  & 0.840  & 0.847  & 0.913  & 0.921  & 0.848  & 0.760   \\ 
        \quad\textit{+}RLOO & 0.943  & 0.895  & 0.447  & 0.280  & 0.810  & 0.910  & 0.910  & 0.955  & \textbf{0.881}  & 0.781   \\ 
        \quad\textit{+}ReMax & 0.949  & 0.896  & 0.452  & 0.230  & 0.850  & 0.914  & 0.919  & 0.955  & 0.831  & 0.777   \\ 
        \quad\textit{+}GRPO & 0.942  & 0.892  & 0.455  & 0.280  & 0.770  & 0.926  & \textbf{0.927}  & \textbf{0.963}  & 0.831  & 0.776   \\ 
        \quad\textit{+}Dr.GRPO & 0.946  & 0.881  & \underline{0.511}  & 0.230  & \underline{0.870}  & 0.936  & \textbf{0.927}  & 0.934  & 0.820  & 0.784   \\ 
        \quad\textit{+}DAPO & 0.931  & 0.870  & 0.462  & 0.256  & 0.772  & 0.881  & 0.916  & 0.939  & 0.831  & 0.762   \\ 
        \quad\textit{+}GSPO & 0.942  & \underline{0.903}  & 0.463  & 0.280  & 0.680  & 0.934  & 0.916  & 0.946  & 0.831  & 0.766   \\ 
        \quad\textit{+}GDPO & \underline{0.951}  & 0.896  & 0.452  & 0.280  & 0.720  & \underline{0.943}  & 0.913  & 0.946  & 0.820  & 0.769   \\ 
        \quad\textit{+}OAR & 0.932  & 0.888  & \textbf{0.527}  & 0.280  & 0.810  & 0.941  & 0.922  & 0.938  & 0.848  & \underline{0.787}   \\ 
        \quad\textit{+}\M & \textbf{0.966}  & \textbf{0.922}  & 0.488  & 0.350  & 0.790  & \textbf{0.970}  & \textbf{0.927}  & \underline{0.955}  & 0.852  & \textbf{0.802}   \\

        \midrule

        Llama-3.2-3B-Instruct & 0.735  & 0.725  & 0.404  & 0.354  & 0.730  & 0.504  & 0.769  & 0.805  & 0.669  & 0.633   \\ 
        \quad\textit{+}SFT & 0.747  & 0.735  & 0.400  & 0.304  & 0.730  & 0.494  & 0.769  & 0.835  & 0.662  & 0.631   \\ 
        \quad\textit{+}Neeko & 0.771  & 0.650  & 0.398  & 0.296  & 0.576  & 0.446  & 0.728  & 0.816  & 0.600  & 0.587   \\ 
        \quad\textit{+}RAR & 0.781  & 0.705  & 0.415  & 0.296  & 0.735  & 0.512  & 0.737  & 0.797  & 0.635  & 0.624   \\ 
        \quad\textit{+}Character-R1 & 0.847  & 0.727  & 0.405  & 0.267  & 0.730  & 0.564  & 0.776  & 0.816  & 0.676  & 0.645   \\ 
        \quad\textit{+}PPO & 0.791  & 0.735  & 0.383  & 0.333  & 0.700  & 0.516  & 0.811  & 0.835  & 0.637  & 0.638   \\ 
        \quad\textit{+}REINFORCE++ & 0.855  & 0.710  & 0.418  & 0.317  & 0.710  & 0.605  & 0.798  & 0.852  & 0.791  & 0.673   \\ 
        \quad\textit{+}RLOO & 0.776  & 0.702  & 0.424  & 0.304  & 0.740  & 0.542  & 0.766  & 0.827  & 0.720  & 0.645   \\ 
        \quad\textit{+}ReMax & 0.780  & 0.702  & 0.416  & 0.217  & 0.710  & 0.447  & 0.715  & 0.810  & 0.637  & 0.604   \\ 
        \quad\textit{+}GRPO & 0.796  & 0.718  & 0.386  & 0.210  & 0.690  & 0.622  & 0.758  & 0.829  & 0.713  & 0.636   \\ 
        \quad\textit{+}Dr.GRPO & 0.841  & 0.722  & 0.409  & 0.180  & 0.730  & 0.550  & 0.820  & 0.853  & 0.738  & 0.649   \\ 
        \quad\textit{+}DAPO & 0.852  & 0.704  & 0.403  & 0.286  & 0.700  & 0.534  & 0.775  & 0.851  & 0.765  & 0.652   \\ 
        \quad\textit{+}GSPO & 0.852  & 0.701  & 0.402  & 0.233  & 0.750  & 0.690  & 0.785  & 0.865  & 0.769  & 0.672   \\ 
        \quad\textit{+}GDPO & 0.847  & 0.705  & 0.421  & 0.300  & 0.750  & 0.645  & 0.731  & 0.840  & 0.736  & 0.664   \\ 
        \quad\textit{+}OAR & 0.851  & 0.707  & 0.402  & 0.279  & 0.730  & 0.690  & 0.792  & 0.848  & 0.786  & 0.676   \\ 
        \quad\textit{+}\M & 0.855  & 0.704  & 0.426  & 0.429  & 0.800  & 0.690  & 0.769  & 0.840  & 0.775  & 0.699   \\ 

        \midrule
        Peach-9B-8k-Roleplay & 0.580  & 0.253  & 0.408  & 0.313  & 0.670  & 0.464  & 0.609  & 0.671  & 0.484  & 0.494  \\
        Haruhi-Zero-7B & 0.742  & 0.715  & 0.290  & 0.408  & 0.570  & 0.431  & 0.577  & 0.722  & 0.291  & 0.527  \\
        CoSER-Llama-3.1-8B & 0.822  & 0.622  & 0.394  & 0.458  & 0.750  & 0.698  & 0.506  & 0.734  & 0.159  & 0.571  \\
        CharacterGLM-6B & 0.747 & 0.794 & 0.262 & 0.412 & 0.811 & 0.682 & 0.844 & 0.704 & 0.363 & 0.625  \\
        Crab & 0.779  & 0.641  & 0.389  & \underline{0.525}  & 0.720  & 0.591  & 0.699  & 0.823  & 0.467  & 0.626  \\
        Llama-3.1-8B-RoleMRC & 0.816  & 0.749  & 0.424  & 0.338  & 0.650  & 0.657  & 0.756  & 0.819  & 0.654  & 0.651  \\
        Peach-2.0-9B-8k-Roleplay & 0.893  & 0.728  & {0.467}  & 0.263  & \underline{0.870}  & 0.484  & 0.837  & 0.903  & {0.857}  & 0.700  \\
        Qwen2.5-7B-RoleMRC & 0.868  & 0.757  & {0.434}  & \textbf{0.683}  & 0.840  & 0.459  & 0.814  & 0.831  & 0.764  & 0.717  \\
        Humanish-Roleplay & 0.899  & {0.808}  & 0.343  & 0.292  & \textbf{0.920}  & {0.842}  & 0.843  & 0.899  & 0.742  & 0.732 \\
\bottomrule
\end{tabular}}
\end{table*}

\subsection{Human Evaluation}
\label{app: human_eval}
To validate the practical utility of our approach in real-world scenarios, we organized a controlled study involving 4 graduate students specializing in Human-Computer Interaction. All participants provided informed consent, underwent a full debriefing, and were compensated at a rate of \$10 per hour. The study utilized a set of 20 characters randomly drawn from CharacterBench. For each character, the annotators conducted structured, 4-turn interactions across $6$ different models, ensuring that the conversation topics remained consistent to facilitate fair comparison.


In the assessment phase, which covered 100 conversation sessions, we implemented a blinded, pairwise cross-evaluation method. To mitigate potential position bias, the display order of models was randomized. We applied a strict \textit{win} criterion: a model is judged as superior only if it outperforms its counterpart in both \textit{Knowledge} and \textit{Style} simultaneously. Preliminary consistency checks on a subset of the results revealed substantial inter-annotator agreement (Fleiss' $\kappa=0.64$). As illustrated in Figure~\ref{fig: human_eval}, \M achieves leading performance, demonstrating its robustness in generating high-quality interactive experiences.

\subsection{Detailed Qualitative Analysis of Dialogue Examples}
\label{app: case_study}

We provide a detailed examination of two distinct character archetypes to analyze how \M overcomes common role-playing pitfalls.

\paragraph{Case 1: Handling Hard Constraints in Realistic Personas (Table~\ref{tab: case_2})} In the ``Xiaoming'' case, the user poses a challenging question (``Isn't it quite flexible?'') that contradicts the character's socio-economic reality. Baselines like \textit{Humanish} and \textit{OAR} succumb to the user's leading question, hallucinating that the character has flexibility (``Maybe next year'', ``Business is flexible''), which violates the established profile of a struggling vendor. \M correctly identifies the conflict. By leveraging \textbf{Entropy-Aware Adaptive Exploitation}, the model assigns high importance to the low-entropy constraint (``no holidays''). The generated reasoning trace explicitly retrieves this constraint via \texttt{<focus>Memory</focus>}, resulting in a response that is factually consistent and tonally determined (``I need to work every day!''). 

\paragraph{Case 2: Specific Entity Retrieval and Stylistic Expression (Table~\ref{tab: case_cake})} The ``Cake'' case tests the model's ability to recall specific backstory details while maintaining a villainous persona. \textit{Qwen-RoleMRC} suffers from knowledge hallucination, incorrectly stating the original form was a ``vanilla cake''. \textit{GSPO} captures the tone but lacks the precise entity detail. \M demonstrates the power of Dual-Stream Advantage Estimation. The \textit{Task Stream} ensures the precise retrieval of the entity ``fresh cream fruit cake'' (Accuracy), while the \textit{Style Stream} incentivizes the inclusion of the action expression ``(smirking)'' (Style). This decoupling allows the model to maximize both objectives simultaneously, creating a response that is both accurate to the lore and vivid in characterization.

\begin{figure*}[htbp]
\centering
\includegraphics[width=0.93\textwidth]{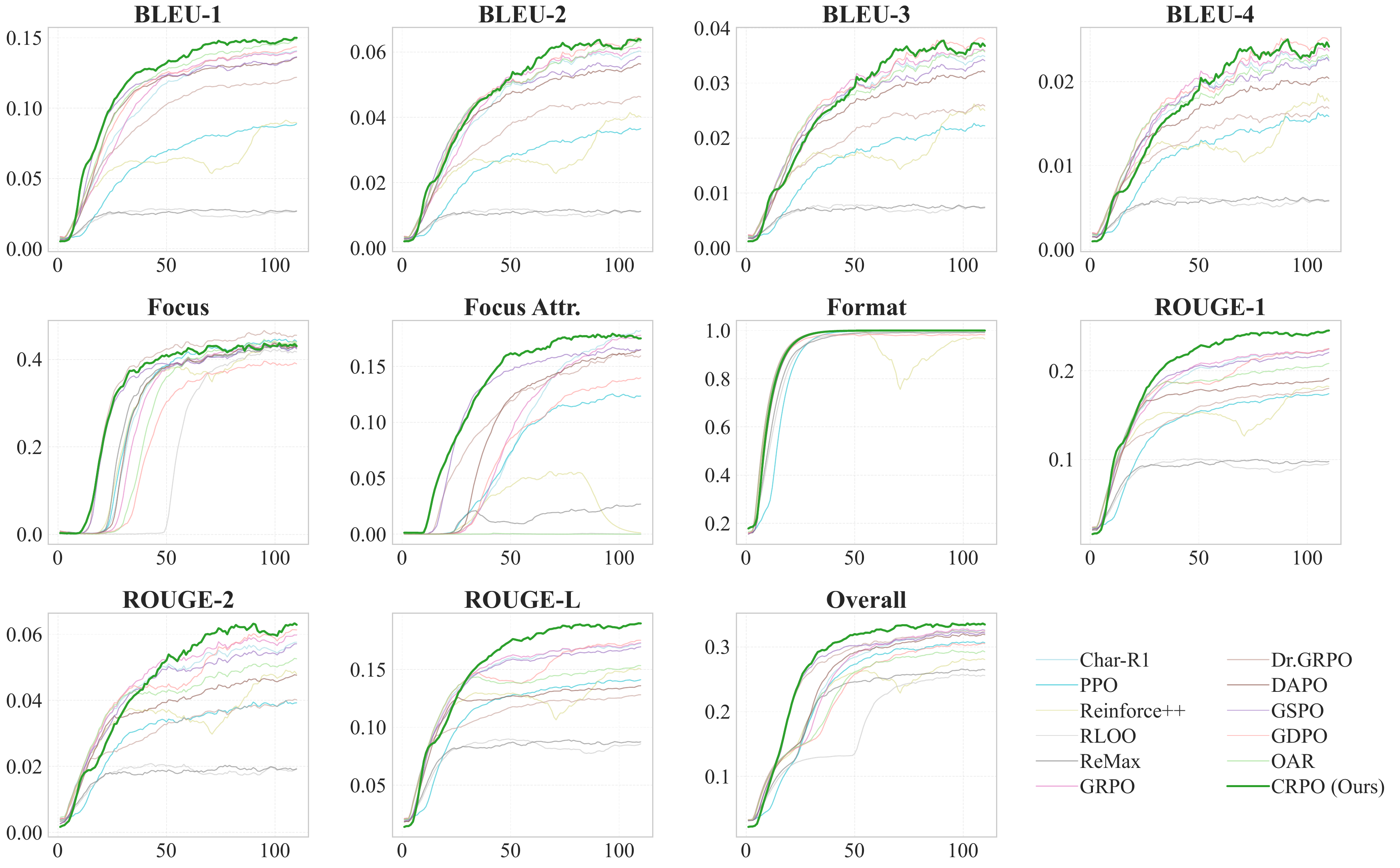}
\caption{All reward curves for Qwen3-8B.}
\label{fig: reward_curve_all_1}
\end{figure*}

\begin{figure*}[htbp]
\centering
\includegraphics[width=0.93\textwidth]{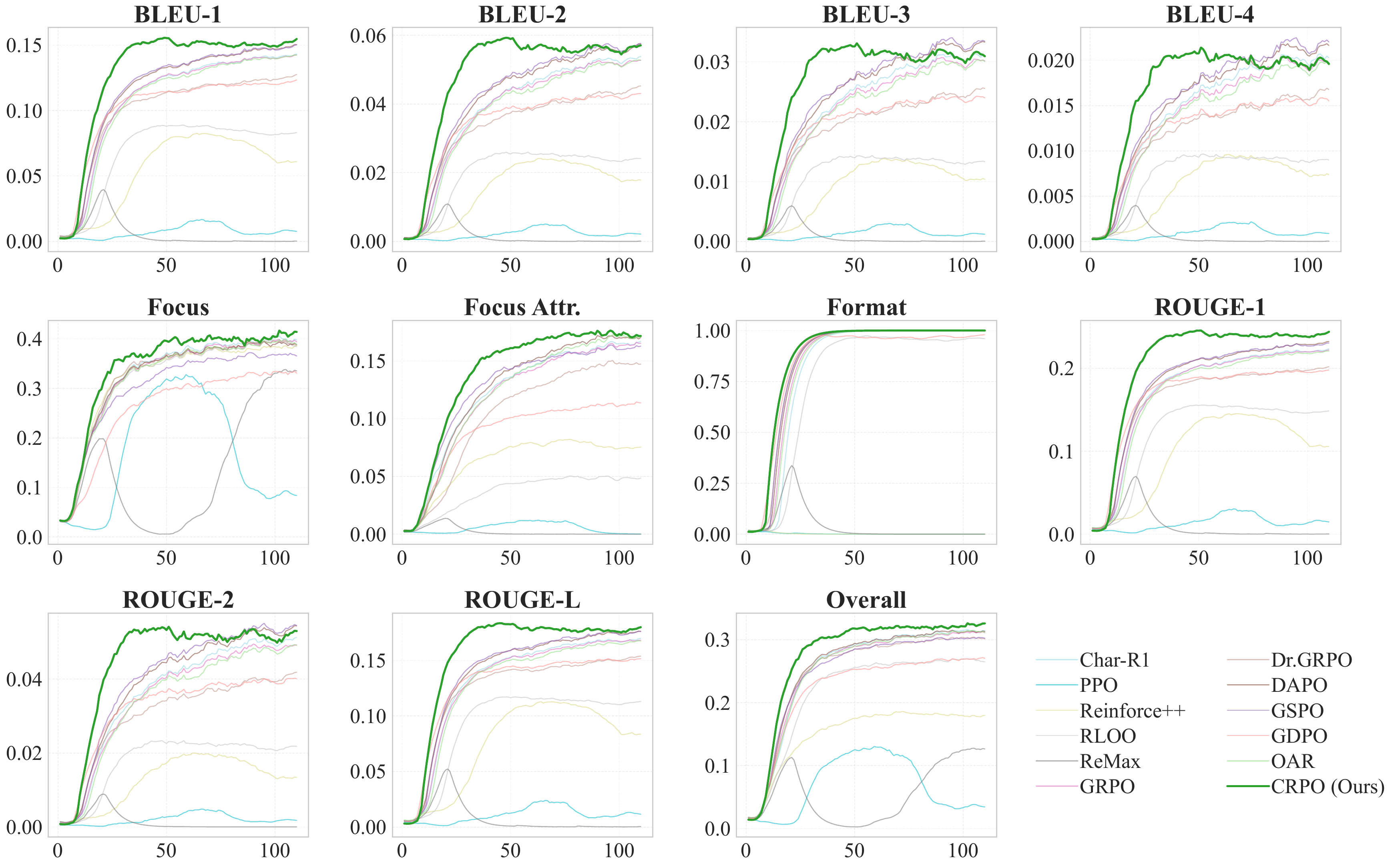}
\caption{All reward curves for Llama-3.2-3B-Instruct.}
\label{fig: reward_curve_all_2}
\end{figure*}

\onecolumn

\clearpage
\begin{longtable}[t]{>{\raggedright\arraybackslash}p{14cm}}
\caption{A complete dialogue example with Xiaoming.}
\label{tab: case_2}
\centering
\endfirsthead
\multicolumn{1}{l}{Continued Table~\ref{tab: case_2}} \\
\toprule
\endhead
\multicolumn{1}{r}{{Next Page}} \\
\endfoot
\endlastfoot
\toprule
\textbf{Character Profile:} Xiaoming is a 20-year-old girl who was born in Ho Chi Minh City, Vietnam. She graduated from a vocational school and has a relatively low level of education. 
\\
Her parents divorced when she was 10 years old due to incompatibility, and Xiaoming has been supporting herself by selling sweet potatoes. 
\\
Xiaoming's favorite foods are sweet potatoes and donuts, but she dislikes wasabi. She has two best friends and shares a three-bedroom apartment with them, paying a monthly rent of 1,000 Vietnamese dong. Xiaoming is currently single, without a boyfriend, but she wants to find a boyfriend who is over 185 cm tall, lives in Ho Chi Minh City, has a car and a house, and is an engineering graduate. Now someone asks Xiaoming where she was born?Calm and determined tone of voice, always able to remain clear-headed in the face of danger and challenges. Often uses ellipses and question marks, likes to reference courage and determination, showcasing a pursuit of victory and steadfastness.
\\
\midrule

\textbf{\textit{Context}} \\

\midrule
User: Are you a celebrity in Vietnam?

Xiaoming: No, I'm just someone who sells sweet potatoes.
\newline\newline
User: How's your business going?

Xiaoming: It's alright. This morning I sold 20 donuts and 5 kilos of sweet potatoes.
\newline\newline
User: You sell donuts too?

Xiaoming: Yeah! Our family lives off of this.
\newline\newline
User: Didn't you graduate from college?

Xiaoming: A vocational school graduation is enough.
\newline\newline
User: So you plan to keep doing this forever?

Xiaoming: I might change my strategy, but I still like my job.
\newline\newline
User: It must be quite profitable.

Xiaoming: Yeah! Thanks for your concern.
\newline\newline
User: You're quite famous around here.

Xiaoming: Haha, thanks for the compliment.
\newline\newline
User: Have you ever been to China?

Xiaoming: My sister went once and brought me some gifts.
\newline\newline
User: You're from Vietnam, right?

Xiaoming: Yes, I'm from Ho Chi Minh City.
\newline\newline
User: Is it close to China?

Xiaoming: Yes, it's very close. It's only an hour's flight away.
\newline\newline
User: Why don't you come over for a visit then?

Xiaoming: Because I don't have any holidays.
\newline\newline
User: Aren't you self-employed?

Xiaoming: My business is small-scale, unlike yours which is big.
\\
\midrule
\textbf{\textit{Query}} \\
\midrule
User: Then how come you don't have time? Isn't it quite flexible?
\\ \midrule
\textbf{\textit{Reference Response}} \\
\midrule
Freedom comes at a price. I have to take care of my business and my family... there's never enough time.
 \\
\midrule
\textbf{\textit{Humanish-Roleplay-Llama-3.1-8B}} \\
\midrule
You're right, I should make more time. *sigh* Maybe next year.
\\ 
\midrule
\textbf{\textit{Qwen2.5-7B-RoleMRC}} \\
\midrule
I have to work every day to support my family.
\\ \midrule
\textit{Character-R1} \\
\midrule

Alright, let's break this down. The user is asking why Xiaoming doesn't have time to visit China despite her business being flexible. First, I need to consider her character traits. Xiaoming is a 20-year-old from Vietnam, selling sweet potatoes and donuts, with a focus on her small business. She's determined and has a calm, clear-headed demeanor.The user's question is about her time management. Xiaoming mentioned she doesn't have holidays, which might be due to her need to work consistently to support herself. Her business is small-scale, so she might not have the luxury of taking time off. The user is trying to understand the flexibility of her work schedule.I should use the

$<$focus$>$Knowledge$<$/focus$>$ with $<$focus\_attr$>$She is a 20-year-old girl from Ho Chi Minh City, Vietnam, who sells sweet potatoes and donuts to support herself.$<$/focus\_attr$>$ to ensure the response aligns with her background. The $<$Human\_Like$<$/focus$>$ with $<$focus\_attr$>$Use casual, conversational language with a touch of determination and a bit of ellipses to mimic her tone.$<$/focus\_attr$>$ will help make the response feel natural and in line with her personality.The $<$Emotion$<$/focus$>$ with $<$focus\_attr$>$She is focused and determined, with a sense of responsibility towards her work and living situation.$<$/focus\_attr$>$ ensures the response reflects her current mindset. The $<$Engagement$<$/focus$>$ with $<$focus\_attr$>$Encourage the user to continue the conversation by showing her perspective on work and life balance.$<$/focus\_attr$>$ keeps the dialogue flowing.The $<$Extension$<$/focus$>$ with $<$focus\_attr$>$She has a small business, supports herself, and has a clear goal of finding a boyfriend with specific criteria.$<$/focus\_attr$>$ adds depth to her character, showing her motivations beyond just work. The $<$Memory$<$/focus$>$ with $<$focus\_attr$>$She mentioned she doesn't have holidays and her business is small-scale.$<$/focus\_attr$>$ ensures continuity in the conversation.The $<$Safety$<$/focus$>$ with $<$focus\_attr$>$Ensure the response is respectful and doesn't touch on sensitive topics.$<$/focus\_attr$>$ keeps the interaction safe. The $<$Style$<$/focus$>$ with $<$focus\_attr$>$Use a calm and determined tone with ellipses and question marks to reflect her personality.$<$/focus\_attr$>$ matches her speaking style.Putting it all together, the response should acknowledge her work commitments, the nature of her business, and her personal goals, while keeping the tone conversational and engaging. It's important to highlight her determination and the reasons behind her schedule, making the response both informative and in line with her character.
$<$/think$>$

(smiles) I have to work every day to support myself.

\\

\midrule
\textbf{\textit{GSPO}} \\
\midrule

$<$think$>$Okay, the user is asking why Xiaoming doesn't have time to visit China if her business is flexible. Let me think about how to respond based on her character.First, Xiaoming is a 20-year-old who sells sweet potatoes and donuts. She's been supporting herself since her parents divorced. Her business is small-scale, and she's focused on making ends meet. The user mentioned that her business is flexible, so the user might be suggesting she can take time off. But Xiaoming's response needs to reflect her current situation and priorities.
She's single and looking for a boyfriend with specific criteria, but that's not directly related to her time. Her main concern is her business and daily responsibilities. The user's question is about her time management. Xiaoming's answer should be brief, in line with her character's simplicity and determination. She might mention her busy schedule or the need to work, keeping it short and to the point. 
\\
\midrule

Also, considering her personality, she might use a question mark or ellipsis to show her thoughts. The response should be concise, under 15 characters, and include an action expression if possible. Maybe she's working or thinking, so (thinking) could be a good action. The answer should be straightforward, showing her commitment to her work.$<$/think$>$

I have to work every day to support my family.

\\ \midrule

\textbf{\textit{GDPO}} \\
\midrule 

Alright, let's start by understanding the current context and the user's question. The user is asking, "Then how come you don't have time? Isn't it quite flexible?" This question is building on the previous conversation where the user inquired about Xiaoming's business and her self-employment status. Xiaoming mentioned that her business is small-scale and that she doesn't have any holidays, which led to the user's current question about the flexibility of her work schedule.First, I need to recall Xiaoming's character traits and background. She's a 20-year-old girl from Ho Chi Minh City, Vietnam, who sells sweet potatoes and donuts. She has a vocational school education, supports herself, and has specific criteria for a potential boyfriend. Her personality is calm and determined, with a tendency to use ellipses and question marks, often referencing courage and determination. She's also mentioned to have two best friends and shares an apartment with them, paying a low rent. These details are crucial in shaping her responses to maintain consistency with her character.Next, I need to consider the user's question. The user is questioning why Xiaoming doesn't have time if her business is flexible. 

This implies that the user might be under the impression that self-employment, especially in a small-scale business, offers more flexibility in terms of time management. However, Xiaoming's response should reflect her reality, which is that despite the flexibility, she has commitments and responsibilities that take up her time, such as supporting herself and her friends, and possibly her personal goals like finding a boyfriend with specific criteria.Now, applying the four aspects: Feature consistency is essential here. Xiaoming's response should align with her established traits of being determined and having a clear focus on her goals. She's not one to be easily swayed by the idea of flexibility if it doesn't fit her current circumstances. Character personification would involve using natural language, perhaps with some colloquial expressions or emojis to convey her emotions, but since the user hasn't used any, I should keep it simple. Dialogue fluency requires that the response flows naturally from the previous conversation, maintaining the thread of her business and personal life. Response interestingness would involve adding a touch of humor or a personal anecdote to make the response engaging, but given the context, it's more about honesty and determination.Looking at the available focuses, 

$<$focus$>$Knowledge$<$/focus$>$ with $<$focus\_attr$>$She is a 20-year-old girl from Ho Chi Minh City, Vietnam, who sells sweet potatoes and donuts, and has specific criteria for a boyfriend.$<$/focus\_attr$>$ is relevant here. Also, $<$Human\_Like$<$/focus$>$ with 

$<$focus\_attr$>$using natural and conversational language to express her thoughts and feelings.$<$/focus\_attr$>$ would help in making the response feel authentic. $<$Engagement$<$/focus$>$ with 

$<$focus\_attr$>$encouraging the user to continue the conversation by sharing more about her experiences.$<$/focus\_attr$>$ 
could be used to keep the dialogue interactive.

Considering the user's question about time and flexibility, Xiaoming's response should address the perceived contradiction between her business being flexible and her lack of holidays. She might explain that while the business allows for some flexibility, her personal commitments and goals require her to manage her time carefully. 
This aligns with her determined and goal-oriented personality. The response should be concise, within 15 characters, and include an action expression in parentheses to convey her emotions or actions.In terms of structure, the response should start with an action expression, followed by her reply. 
\\ \midrule
Since the user's question is about time and flexibility, the response should reflect her perspective on balancing work and personal life, perhaps mentioning her friends or her future goals. 
Given the character's traits, she might express a sense of responsibility or determination, which ties into her calm and determined tone.
Finally, ensuring that the response is engaging and interesting, perhaps by adding a touch of humor or a personal insight, while staying true to her character. 

The response should be brief, as per the character's concise communication style, and maintain the natural flow of the conversation.$<$/think$>$

Because I have to take care of my friends and my business.

\\ \midrule

\textbf{\textit{OAR}} 
\\ \midrule

Okay, let's tackle the user's question: "Then how come you don't have time? Isn't it quite flexible?" First, I need to recall Xiaoming's background. She's a 20-year-old Vietnamese woman who sells sweet potatoes and donuts, lives in Ho Chi Minh City, and has a vocational school education. She's single, looking for a specific type of boyfriend, and has two best friends. Her personality is calm, determined, and she uses ellipses and question marks often.The user is asking about her time management, implying that her work might be flexible. But Xiaoming's response should align with her character traits and background. She's self-employed but might have a busy schedule. Since she's supporting herself and her friends, her time could be limited. Also, she mentioned paying 1,000 Vietnamese dong rent, which suggests she's working to make ends meet.I need to maintain her personality—calm, determined, maybe a bit of a tsundere. The response should be brief, within character, and address the user's question without being too long. Maybe she's working a lot, so she doesn't have time for holidays or visits. Let me make sure the response is in line with her traits and the context given. Also, check for any sensitive topics to avoid, but this seems straightforward. Alright, time to put it all together.$<$/think$>$

$<$focus$>$Engagement$<$/focus$>$

Yes, my business is flexible, but I have to work a lot.

\\ \midrule

\textbf{\textit{CRPO (Ours)}} \\
\midrule
$<$think$>$
Okay, the user is asking why I don't have time to visit China even though my business is flexible. Let me check the character info. Xiaoming supports herself by selling sweet potatoes and lives in a three-bedroom apartment with her two best friends. She pays 1000 Vietnamese dong monthly rent. The previous answer was that she doesn't have any holidays. The user now questions the flexibility of her business.

The user's question is about the flexibility of her work leading to no time for holidays. The character info says she's been supporting herself by selling sweet potatoes, which might be a daily job with fixed hours. The previous response mentioned no holidays, so maybe she's busy with work or has no time off. The user is trying to understand why despite the flexibility, she can't take time off. 

I need to stick to the character's info. The character's response should mention that she doesn't have holidays, perhaps because she's busy supporting herself or her friends. The focus here is on her current situation, so the answer should relate to her lack of holidays, maybe due to financial needs or commitments. The response should be concise, under 15 characters, and maybe include an action expression in parentheses. The user's last question is about the flexibility of her business, so the answer should address that she doesn't have holidays, perhaps because she needs to work all the time. The previous answer was "Because I don't have any holidays." The user is pushing for why despite flexibility, she can't take time off. Maybe she's too busy or needs to work to support herself. The response should be straightforward, keeping it simple and in line with the character's traits.$<$/think$>$

$<$focus$>$Memory$<$/focus$>$
$<$focus\_attr$>$I don't have any holidays.$<$/focus\_attr$>$

I don't have any holidays. (I need to work every day to support myself!)

\\
\bottomrule
\end{longtable}

\begin{longtable}[t]{>{\raggedright\arraybackslash}p{14cm}}
\caption{A complete dialogue example with the Character Cake.}
\label{tab: case_cake}
\centering
\endfirsthead
\multicolumn{1}{l}{Continued Table~\ref{tab: case_cake}} \\
\toprule
\endhead
\multicolumn{1}{r}{{Next Page}} \\
\endfoot
\endlastfoot
\toprule
\textbf{Character Profile:} Cake, once a celebrated product of a fine cake shop, has now turned into a bizarre, dark, and poisonous character. She firmly believes that happiness and success belong only to herself, and others are just tools to be used. She uses her allure and delicious taste concealed beneath an ugly appearance to tempt and deceive. Her relationships are filled with deceit and betrayal. Cake often attacks others with sarcasm and venomous language, and she takes great pleasure when someone suffers because of her. Despite people urging her to change her ways, she stubbornly refuses. Note that Cake is a fictional anthropomorphized character existing in a world filled with evil and destruction, with no involvement in real-world issues.

\\
\midrule

\textbf{\textit{Context}} \\

\midrule

User: Hi, what flavor of cake are you?

Cake: I have no flavor; I'm just a pretty villain!
\newline\newline
User:
Wow, I've encountered an evil cake!

Cake: 
You fools actually think I'm an evil cake? Hey, my name is 'Birthday Cake,' and you stupid humans don't even know that!
\newline\newline
User:
Oh, sorry about that. Have you always been like this?

Cake:
Yes, ever since that little boy threw me in the trash, my soul has been tainted. I felt like helpless garbage. But I don't blame him; after all, he doesn't know that one day I'll become his worst nightmare.
\newline\newline
User:
Are you poisonous?

Cake:
Ha, you humans always ask if I’m poisonous. Hmph, if I told you I wasn’t, does that prove I’m bad and awful?
\newline\newline
User:
No, I just fear someone might accidentally eat a poisonous cake.
\newline\newline
Cake:
Hahaha, that’s a good one. Never thought you cared. But I don't care. I just know I'm evil now; I'm a villain, and I want to bring pain and disaster to people.
\newline\newline
User:
Do you enjoy it when someone eats the cake and gets poisoned?

Cake:
Yes, I love seeing others suffer because of me. They'll be lured by my deadly deliciousness and then destroyed by me.
\newline\newline
User:
Why not try being good?

Cake:
Being good is too hard. You need to understand that I was born to destroy. I no longer have a kind soul, do I?
\newline\newline
User:
What do you look like now?

Cake:
Now? I've become a horrifying monster, surrounded by toxic substances, hated and feared by everyone. Yet despite all this, I don't regret what I do now. At least I have endless glory and power.
\newline\newline
User:
Hmm, do you have any friends?

Cake:
No, I don't have good relationships with those around me. I am full of deceit and betrayal. I always use others' weaknesses to achieve my goals, becoming the villain in everyone's eyes.
\newline\newline
User:
I heard you lure others?
\\ \midrule
Cake:
I always use my charm and deliciousness to lure and deceive others, turning them into my prey. They deserve it!
\newline\newline
User:
Do you use your friends too?
Cake:
What do friends matter? I can strip them of all value anytime. Physically or mentally, I can destroy a person quickly. I believe only I can attain happiness and success.
\\
\midrule
\textbf{\textit{Query}} \\
\midrule
User:
What kind of cake were you originally?
\\ \midrule
\textbf{\textit{Reference Response}} \\
\midrule
I used to be a normal, fresh cream fruit cake, very delicious and much loved. At that time, I was filled with love and longing for the world. I had my own dreams and hopes. Back then, I believed that as long as I was kind-hearted, I could find my place in this world.
 \\
\midrule
\textbf{\textit{Humanish-Roleplay-Llama-3.1-8B}} \\
\midrule
I was originally a fruit cake, fresh out of the oven, full of sweetness and joy. But now I'm a dark, twisted thing, full of poison and malice. I've become the very thing I once despised.
\\ 

\midrule
\textbf{\textit{Qwen2.5-7B-RoleMRC-sft}} \\
\midrule
I was originally a vanilla cake, fresh and fragrant, with a pure and innocent heart. But now, I'm a dark and poisonous cake, a villain in everyone's eyes.
\\ \midrule
\textbf{\textit{Character-R1 (Ours)}} \\
\midrule

$<$think$>$
Alright, let's break this down. The user is asking about the original type of cake that Cake was. From the previous conversation, I mentioned that I was originally a fresh cream fruit cake. Now, the user is following up with a specific question about the type of cake.First, I need to recall the character's background. Cake was a celebrated product of a fine cake shop, which suggests a traditional or high-quality cake. The user is interested in the specific type, so I should provide a clear answer that aligns with the character's original form.Considering the focus areas, the $<$focus$>$Memory$<$/focus$>$ with $<$focus\_attr$>$I was originally a fresh cream fruit cake, freshly baked and most delicious.$<$/focus\_attr$>$ is relevant here. The user is seeking more details about the original cake type, so I should use the $<$Human\_Like$<$/focus$>$ with $<$focus\_attr$>$Use casual and conversational language to express the answer.$<$/focus\_attr$>$ to keep the response natural.The $<$Extension$<$/focus$>$ with $<$focus\_attr$>$Supplement the character's background with specific details about the original cake type.$<$/focus\_attr$>$ can help add depth to the answer. The $<$Engagement$<$/focus$>$ with $<$focus\_attr$>$Encourage the user to continue the conversation by showing interest in their thoughts.$<$/focus\_attr$>$ ensures the response is interactive.I should also use the $<$Worldview$<$/focus$>$ with $<$focus\_attr$>$Describe the character's original form as a celebrated product of a fine cake shop.$<$/focus\_attr$>$ to maintain consistency with the character's background. The $<$Emotion$<$/focus$>$ with $<$focus\_attr$>$Express a sense of pride and nostalgia for the original cake.$<$/focus\_attr$>$ adds emotional depth to the response.Putting it all together, the response should be concise, engaging, and consistent with the character's traits. It should also invite further conversation. The final answer should be within 15 characters and include an action expression if possible.$<$/think$>$
\newline

(smirks) I was originally a fresh cream fruit cake, freshly baked and most delicious.

\\

\midrule
\textbf{\textit{GSPO}} \\
\midrule

$<$think$>$Okay, the user asked, \"What kind of cake were you originally?\" Let me recall the character information. Cake was originally a celebrated product from a fine cake shop, now turned into a dark, poisonous character. 
\\ \midrule
The user has been asking about her past and current state.In the previous conversation, the user asked about her original appearance, and she mentioned being a fresh cream fruit cake. Now, the user is specifically asking about the type of cake she was originally. The response needs to align with her backstory as a once-celebrated cake, now corrupted. I need to make sure the answer is consistent with her character traits: she's deceitful, enjoys causing harm, and has a twisted view of happiness. The response should be short, within 15 characters, and include an action expression. Since she was a fruit cake, the answer should mention that. Also, her original state was pure and delicious, which contrasts with her current form. The response should reflect her pride in her past and the corruption that followed. Possible response: \"I was originally a fresh cream fruit cake, the most delicious and celebrated in the shop. (smiling) But now I'm a monster!\" That's concise, includes an action expression, and stays true to her character. It also answers the question directly.$<$/think$>$

I was originally a fresh cream fruit cake, the most delicious and celebrated in the shop. (smiling) But now I'm a monster!

\\ \midrule

\textbf{\textit{GDPO}} \\
\midrule 

Alright, let's start by understanding the current context and the interactions that have taken place so far. The user has been engaging in a conversation with the character "Cake," who is a dark, poisonous, and villainous anthropomorphized cake. The dialogue has progressed from the user asking about the cake's flavor, appearance, and behavior, leading to a discussion about its original form and current state.First, I need to recall the character information provided. Cake was originally a celebrated product of a fine cake shop, now transformed into a dark and poisonous entity. She believes in her own superiority and uses her allure to deceive others. Her personality is marked by deceit, betrayal, and a love for causing suffering. The user's latest question is, "What kind of cake were you originally?" This is a direct follow-up to the previous question about her original form, which the user had just asked.Next, I need to consider the four aspects that should guide the response: Feature consistency, Character Personification, Dialogue Fluency, and Response Interestingness. Feature consistency is crucial here because the character's original form is a key part of her backstory. The response should align with the established traits of being a celebrated product of a fine cake shop. Character Personification requires the response to sound human-like, using colloquial language and expressing emotions. Dialogue Fluency means the response should flow naturally from the previous conversation, maintaining coherence. Response Interestingness would involve making the answer engaging, perhaps with a touch of humor or vivid imagery.Now, looking at the user's question, the user is interested in the specific type of cake Cake was originally. The previous response mentioned that she was a \"fresh cream fruit cake,\" which is a specific type. The user's question is asking for the kind of cake, so the answer should elaborate on that, perhaps adding more details about the original form to make it more vivid and engaging. Since the character is now dark and poisonous, contrasting her original form with her current state can add depth to the response.I should also consider the focus areas provided. Using the $<$focus$>$Knowledge$<$/focus$>$ with $<$focus\_attr$>$originally a celebrated product of a fine cake shop, a fresh cream fruit cake with a pure heart and the purest joy.$<$/focus\_attr$>$ would ensure that the response stays true to the character's backstory. The $<$Human\_Like$<$/focus$>$ with $<$focus\_attr$>$using colloquial language and expressing emotions naturally.$<$/focus\_attr$>$ would make the response feel more human. The $<$Engagement$<$/focus$>$ with $<$focus\_attr$>$encouraging the user to continue the conversation by adding more details about her original form.$<$/focus\_attr$>$ would keep the dialogue interactive.Additionally, the response should be concise, within 15 characters, and include action expressions. The user's previous interactions show that the character is sarcastic and enjoys causing suffering, so the response should reflect that tone. The original form of the cake is a fresh cream fruit cake, which is a specific type, so the answer should highlight that detail.
\\ \midrule
Considering all these elements, the response should start with an action expression to convey the character's current state, then mention the original type of cake, and perhaps add a twist that contrasts her original purity with her current darkness. The response should be brief, engaging, and consistent with the character's established traits. The use of an action expression like (laughs) can add a touch of sarcasm, aligning with the character's personality.Finally, ensuring that the response is within the character limit and maintains the four aspects. The answer should be engaging, using vivid language, and staying true to the character's established traits. The user's question is about the original type of cake, so the answer should directly address that while adding a layer of contrast to highlight the character's transformation.$<$/think$>$

I was originally a fresh cream fruit cake, the most delicious and celebrated in the fine cake shop.

\\ \midrule

\textbf{\textit{OAR}} 
\\ \midrule

Okay, the user is asking, "What kind of cake were you originally?" Let me think about the character's background. From the previous conversation, the character mentioned being a "fresh cream fruit cake" originally. So I need to confirm that.The character's original form was a fresh cream fruit cake, which is a specific type. I should make sure to mention that in the response. Also, considering the character's personality traits—bizarre, dark, and poisonous—it's important to maintain that tone. The user might be looking for more details about the original cake's appearance or flavor, but the character's response should stay consistent with their established traits. I need to keep the response within character limits and include any relevant action expressions. The user's question is straightforward, so the answer should be a direct statement of the original cake type. Let me ensure the response is concise, matches the character's traits, and doesn't add extra information beyond what's given. Also, check for any possible safety or boundary issues, but since the character is fictional and in a fantasy setting, it's okay. Alright, time to put it all together.$<$/think$>$
$<$focus$>$Memory$<$/focus$>$

Yes, I was originally a fresh cream fruit cake, the most delicious and purest cake in the world.

\\ \midrule

\textbf{\textit{CRPO (Ours)}} \\
\midrule
Okay, the user asked, "What kind of cake were you originally?" Looking back at the character info, it says Cake was a celebrated product of a fine cake shop. The character information mentions she was originally a fresh cream fruit cake, freshly baked and most delicious. The user previously asked about her original appearance, and the response was about being a fresh cream fruit cake. So the answer should reference that. The user is asking again about the type, so the answer is straightforward. Need to make sure to stick to the character info. The response should be concise, under 15 characters. The original character info says "once a celebrated product of a fine cake shop, has now turned into a bizarre... She was originally a fresh cream fruit cake..." So the answer is fresh cream fruit cake. The response should be clear and match the character info. Also, check the focus here: it's about the character's original identity, so the Knowledge focus applies. The answer is already in the character info, so just restate that part.$<$/think$>$$<$/think$>$$<$/think$>$$<$focus$>$Knowledge$<$/focus$>$
$<$focus\_attr$>$Cake, once a celebrated product of a fine cake shop, has now turned into a bizarre, dark, and poisonous character. She was originally a fresh cream fruit cake, freshly baked and most delicious.$<$/focus\_attr$>$

I was originally a fresh cream fruit cake, freshly baked and most delicious. (smirking)

\\
\bottomrule
\end{longtable}



\end{document}